\newcommand{\R}{\mathbb{R}}
\newlength{\gap}
\newcommand{\matr}[1]{\mathbf{#1}}
\newcommand{\method}{AlterRep}
\newcommand{\mask}{\textsc{[MASK]}}
\newtheorem{theorem}{Theorem}[section]
\newtheorem{claim}[theorem]{Claim}
\newcommand*\iftodonotes{\if@todonotes@disabled\expandafter\@secondoftwo\else\expandafter\@firstoftwo\fi}  
\title{Counterfactual Interventions Reveal the Causal Effect of Relative Clause Representations on Agreement Prediction}
\author{Shauli Ravfogel$\footnotemark[1]$  \textsuperscript{1,2} \;\;\; Grusha Prasad$\thanks{~~Equal contribution.}$ \textsuperscript{3} \;\;\; Tal Linzen\textsuperscript{4} \;\;\; Yoav Goldberg\textsuperscript{1,2} \\
\textsuperscript{1}Computer Science Department, Bar Ilan University \\
\textsuperscript{2}Allen Institute for Artificial Intelligence \\
\textsuperscript{3}Cognitive Science Department, Johns Hopkins University \\
\textsuperscript{4}Department of Linguistics and Center for Data Science, New York University \\
 \tt{linzen@nyu.edu, grusha.prasad@jhu.edu} \\
 
 \tt{ \{shauli.ravfogel, yoav.goldberg\}@gmail.com}
 }
\begin{document}
\maketitle
\begin{abstract}
When language models process syntactically complex sentences, do they use their representations of syntax in a manner that is consistent with the grammar of the language? We propose \method, an intervention-based method to address this question. For any linguistic feature of a given sentence, \method\ generates counterfactual representations by altering how the feature is encoded, while leaving intact all other aspects of the original representation. By measuring the change in a model's word prediction behavior when these counterfactual representations are substituted for the original ones, we can draw conclusions about the causal effect of the linguistic feature in question on the model's behavior. We apply this method to study how BERT models of different sizes process relative clauses (RCs). We find that BERT variants use RC boundary information during word prediction in a manner that is consistent with the rules of English grammar; this RC boundary information generalizes to a considerable extent across different RC types, suggesting that BERT represents RCs as an abstract linguistic category. 
\end{abstract}

\setlength{\Exlabelwidth}{0.5em}
\setlength{\SubExleftmargin}{1.35em}

\section{Introduction}
\label{intro}

The success of neural language models, both in NLP tasks and as cognitive models, has fueled targeted evaluation of these models' word prediction accuracy on a range of syntactically complex constructions \cite{linzen2016assessing, gauthier2020syntaxgym, warstadt2020blimp, mueller2020crosslinguistic, marvin-linzen-2018-targeted}. 
What are the internal representations that support such sophisticated syntactic behavior? In this paper, we tackle this question using an intervention-based approach \cite{woodward2005making}. Our method, \method, is designed to study whether a model uses a particular linguistic feature in a manner which is consistent with the grammar of the language. The method involves two steps: first, it generates \textit{counterfactual}\footnote{We use the word \textit{counterfactual} as it is used when referring to \emph{counterfactual examples} \cite{DBLP:journals/corr/abs-2010-10596}: an altered version of an element that is similar to the original element in all aspects except one.} contextual
word representations by altering the neural network’s representation of the linguistic feature under consideration; and second, it characterizes the change in the model’s word prediction behaviour that results from replacing the original word representations with their counterfactual variants. If the resulting change in word prediction aligns with predictions from linguistic theory, we can infer that the model uses the feature under consideration in a manner consistent with the grammar of the language.

We demonstrate the utility of \method\ using relative clauses (RCs). According to the grammar of English, to correctly determine whether the masked verb in~\ref{ex:orc_agreement_mask} should be singular or plural, a model must recognize that the masked verb is outside the RC \emph{the officers love}, and should therefore agree with the subject of the main clause (\textit{the skater}, which is singular), rather than with the subject of the RC (\textit{the officers}, which is plural). 

\ex. \label{ex:orc_agreement_mask}The skater \textbf{the officers love} \mask\  happy. 

To investigate whether a neural model uses RC boundary representations as predicted by the grammar of English, we use \method\ to generate two counterfactual representations of the masked verb: one which encodes (incorrectly) the verb is \textit{inside} the RC, and another which encodes (correctly) that the verb is \textit{outside} the RC. Crucially, the difference between the counterfactual and original representations is \textit{minimal}: the aspects of the representation which do not encode information about RC boundaries remain unchanged. Therefore, if the model uses RC boundary information as dictated by the grammar of English---and if our method successfully identifies the way in which RC boundary information is represented by the model---we expect the incorrect counterfactual to cause the masked verb to incorrectly agree with the noun \textit{inside} the RC, and the correct counterfactual to cause agreement with the noun \textit{outside} the RC, correctly.

\begin{figure}
    \centering
    \includegraphics[width=0.99\columnwidth]{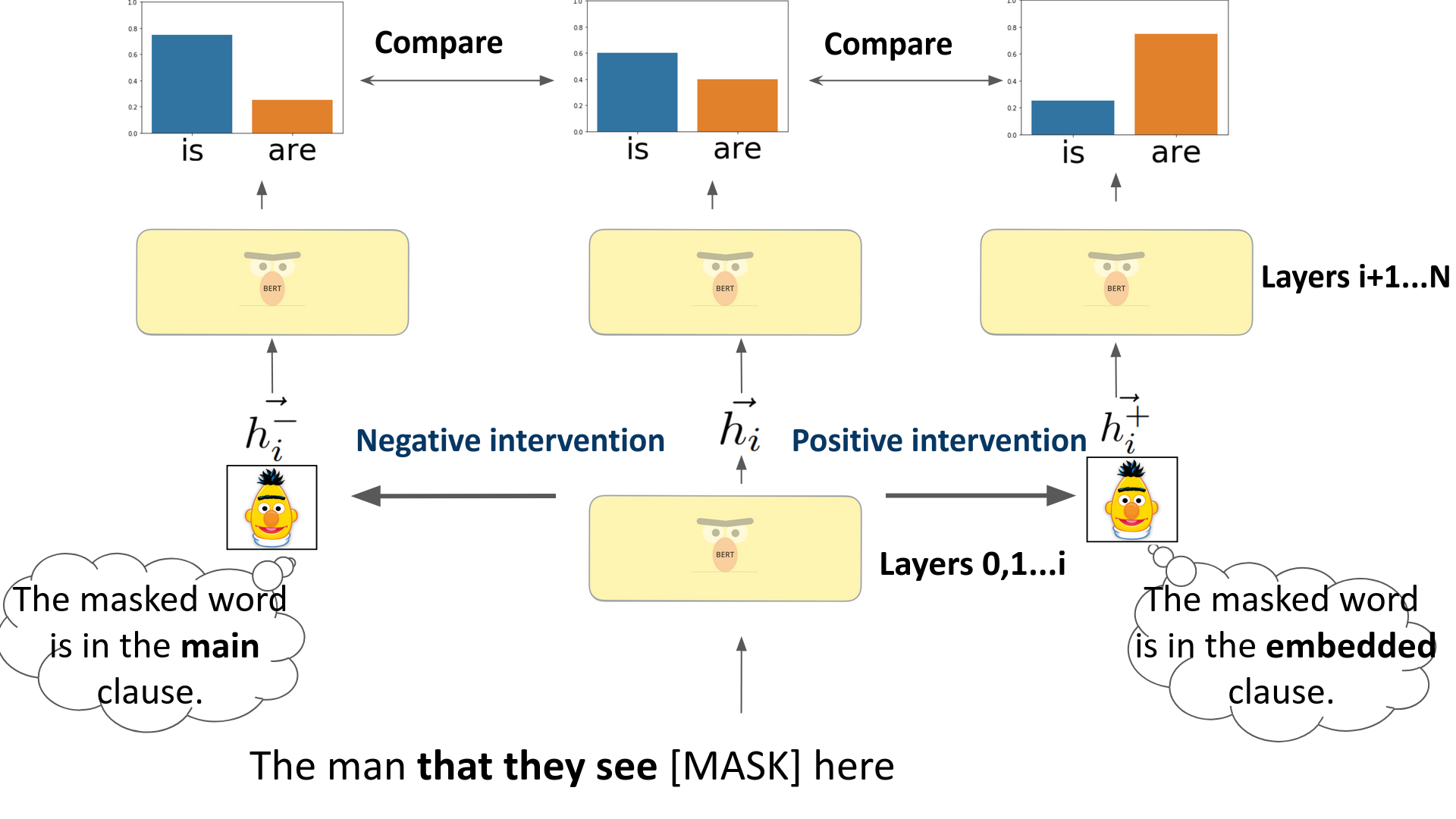}
    \caption{Causal analysis with counterfactual intervention. Given a representation $h$ of a masked word, we derive two new representations $h^{-}, h^{+}$ that differ in the information they contain with respect to a specific linguistic property. The predictions of the model over the counterfactual representations are compared with the original prediction $\hat{Y}$.}
    \label{fig:counterfactuals-graphical}
\end{figure}

We report experiments applying this logic to BERT variants of different sizes \cite{bert, DBLP:journals/corr/abs-1908-08962}. 
We found that while all layers of the BERT variants encoded information about RC boundaries, only the information encoded in the middle layers was \textit{used} in a manner consistent with the grammar of English. This contrast highlights the pitfalls of drawing behavioral conclusions from probing results alone, and motivates causal approaches such as \method.  

For BERT-base, we also found that counterfactual representations learned solely from one type of RC influenced the model's predictions in sentences containing \textit{other} RC types, suggesting that this model encodes information about RC boundaries in an abstract manner that generalizes across different RC types. Going beyond our case study of RC representations in BERT variants, we hope that future work can apply this method to test linguistically motivated hypotheses about a wide range of structures, tasks and models.


\section{Background}

\begin{table*}[t]
        \resizebox{\textwidth}{!}{
		\begin{tabular}{ll}
		\toprule
		\textbf{Abstract structure} & \textbf{Example} \\
		\midrule
		Unreduced Object RC (ORC) & The conspiracy that the employee welcomed divided the \textcolor{gray}{beautiful} country. \\
		Reduced Object RC (ORRC) & The conspiracy the employee welcomed divided the \textcolor{gray}{beautiful} country. \\
		Unreduced Passive RC (PRC) & The conspiracy that was welcomed by the employee divided the \textcolor{gray}{beautiful} country. \\
		Reduced Passive RC (PRRC) & The conspiracy welcomed by the employee divided the \textcolor{gray}{beautiful} country.\\
		Active Subject RC (SRC) & The employee that welcomed the conspiracy \textcolor{gray}{quickly} searched the building\textcolor{gray}{s}. \\
		\midrule
		P/OR(R)C-matched Coordination & The conspiracy welcomed the employee and divided the \textcolor{gray}{beautiful} country. \\
		SRC-matched Coordination & The employee welcomed the conspiracy and \textcolor{gray}{quickly} searched the building\textcolor{gray}{s}.\\

		\bottomrule
		
		\end{tabular}}
		
		\caption{\label{tab:ex}  Examples of sentences generated from the 5 RC structures, the 2 coordination structures. Elements which only occur in a subset of the examples are indicated in grey. This table is copied from \citet{prasad-etal-2019-using}.
		}

\end{table*}

\subsection{Relative clauses (RCs)}\label{rc_intro}
\label{sec:rc}

An RC is a subordinate clause that modifies a noun. The head of the RC needs to be interpreted twice---once in the main clause, and once inside the RC---but it is omitted from inside the RC, replaced by an unpronounced ``gap''. For example, in \ref{ex:orc}, the RC (in bold) describes the subject of the main clause \textit{the book}. Since \textit{the book} is the object of the embedded clause, we say that the gap is in the object position of the RC (indicated by underscores).

\ex. \label{ex:orc} The books \textbf{that my cousin likes \underline{\hspace{\gap}}} were interesting. (Object RC)

RCs can structurally differ from the Object RC in~\ref{ex:orc} in several ways: the overt complementizer \emph{that} can be excluded, as in~\ref{ex:orrc}; the gap can be in the subject instead of object position of the embedded clause, as in~\ref{ex:src}; 
and so on. The five types of RCs we consider in this paper are outlined in Table~\ref{tab:ex}.

\ex. \label{ex:orrc} The books \textbf{my cousin likes \underline{\hspace{\gap}}} were interesting. (Reduced Object RC)

\ex. \label{ex:src}  My cousin \textbf{that \underline{\hspace{\gap}} likes the books} was interesting. (Subject RC)


These differences do not affect the strategy that a system that follows the grammar of English should use to determine the number of the verb: regardless of the internal structure of the RC, a verb outside the RC should agree with the subject of the main clause, whereas a verb inside the RC should agree with the subject of the RC. 
Thus, a model that does not properly identify the boundaries of the RC will often predict a singular verb where a plural one is required, or vice versa.

\subsection{Iterative Null Space Projection (INLP)}
\label{sec:inlp}

INLP \cite{inlp} is a method for selectively identifying and removing user-defined concept features from a contextual representation. Let $T$ be a set of words-in-context, and let $H$ be the set of representations of $T$, such that $\vec{h_t} \in \mathbb{R}\strut^d$ is the contextualized representation of the word $t \in T$. Let $F$ be a linguistic feature that we hypothesize is encoded in $H$.
Given $H$ and the values $f_t$ of the feature $F$ for each word, INLP returns a set of $m$ linear classifiers, each of which predicts $F$ with above-chance accuracy. Each of these classifiers is a vector in $\R^d$, and corresponds to a direction in the representation space. The $m$ vectors can be arranged in a matrix $\matr{W}^{m \times d}$. Since the $m$ classifiers are mutually orthogonal, so are the rows of $\matr{W}$. Each linear classifier can be interpreted as defining a \textit{separating plane}, which is intended to partition the space, as much as possible, according to the values of the feature $F$. In our case, $F$ can take one of two values---whether or not a given word $t$ is in an RC---and each direction in $\matr{W}$ is intended to separate words that are inside RCs from words that are outside them.\footnote{In this paper, we make the simplifying assumption that sentences do not contain RCs that are nested within one another (cf. \citealt{lakretz2021}). To accommodate such sentences in future work, an integer feature could be used whose value would be $0$ if the word is outside any RC; $1$ if it is inside an RC of depth~$1$; $2$ if it is inside an RC of depth~$2$; and so on. As long as we specify a bound on the embedding depth, this feature would still be categorical, and a variant of our method could still be used.}

The \textit{feature subspace}---the space spanned by all the learned directions ($R=span(\matr{W})$)---is a subspace of the original representation space that contains information useful to linearly decode $F$ with high accuracy.
The orthogonal complement of $R$ (the \emph{null space}; $N$) is a subspace in which it is \emph{not} possible to predict $F$ with high accuracy.

\begin{figure}[t]
\centering
\includegraphics[width=0.99\columnwidth]{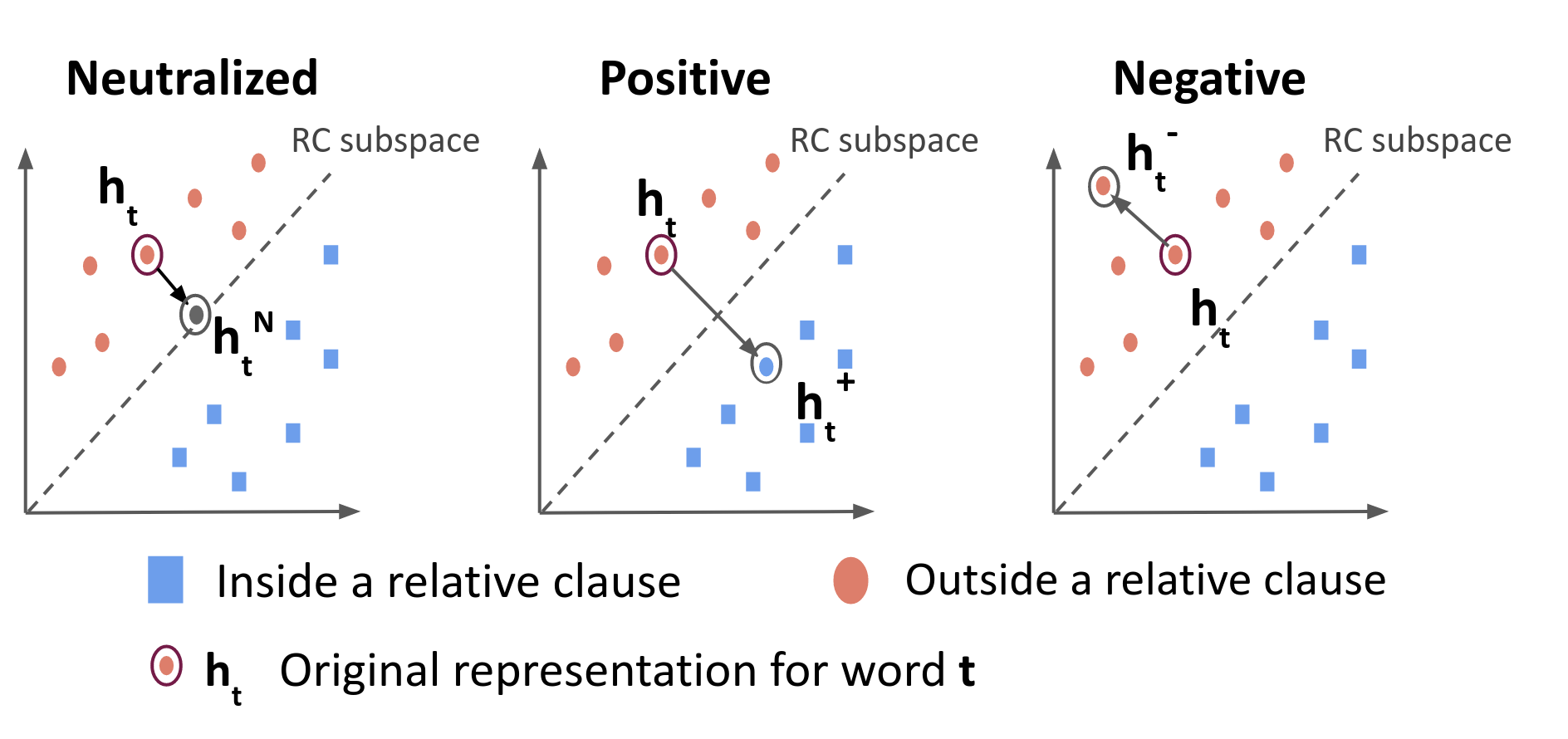}
\caption{Generating counterfactual representations. 
A representation $\vec{h}^t$ of a word outside of an RC is transformed to create counterfactual mirror images $\vec{h^-_t}$,$\vec{h^+_t}$  with respect to an empirical RC subspace. The RC subspace here is a 1-dimensional line for illustrative purposes; in practice we use an 8-dimensional subspace.}
\label{fig:counterfactuals}
\end{figure}

\section{\method: Generating Counterfactual Representations}
\label{sec:counterfactual}

The goal of \method\ is to generate, based on a model's contextual representations of a set of words, a set of counterfactual 
representations that modify the encoding of a feature $F$ while leaving all other aspects of the representations intact.\footnote{We aim to propose a concrete instantiation that \textit{approximates} the counterfactual.} 
If swapping these counterfactual representations for the model's original representations changes the model's probability distribution over predicted words in a way that aligns with the feature's linguistic functions, we say that the model uses $F$ for word prediction in a manner that is consistent with the grammar of English. 

For our case study, we use a feature with two possible values: `$+$' if the word is inside an RC and `$-$' if it is not.\footnote{In the experiments below, we will only apply this procedure to sets of representations of words that are all in a particular \emph{type} of RC (for example, Object RCs). We do, however, test whether the representations of RC boundary generalize across RC types; see Prediction~3 in \S\ref{src:predictions}.} We generate two counterfactual representations: $h_t^+$, which encodes that the word $t$ is inside an RC---regardless of its actual syntactic position in the sentence---and $h_t^-$, which is similar to $h_t^+$ in all respects except it encodes that $t$ is not in an RC. Our method allows us to generate $h_t^+$ and $h_t^-$ irrespective of the feature value encoded in the original representation $h_t$. If the model uses this feature appropriately, we expect $h_t^+$ and $h_t^-$ to lead to different predictions in contexts where correct word predictions depend on determining whether or not the word is inside an RC.

\paragraph{Row-space and Null-space}
Recall that INLP defines a feature space $R$ where the property of interest is encoded, and a complement subspace $N$ where it is not.

We can project any word representation $\vec{h_t}$ to the feature subspace (here, the RC subspace) or to the null space, resulting in the vectors $\vec{h_t^R}$ and $\vec{h_t^N}$, respectively: $\vec{h_t^R}$ maintains the information needed to predict $F$ from $\vec{h_t}$, while  $\vec{h_t^N}$ maintains all information which is \emph{not} relevant for predicting $F$.  
INLP can be used to generate ``amnesic counterfactuals" \citep{amnesic-probing}, which do not encode a given property, even if the original representation did encode that property. In the next paragraph we propose a way to use this algorithm to \textit{manipulate} the value of the feature, rather than remove it.

\paragraph{Generating Counterfactual Representations} We obtain the counterfactual representations $\vec{h_t^+}$ and $\vec{h_t^-}$ as follows. As we discussed in \Cref{sec:inlp}, INLP identifies planes---one for each direction (row) in $\matr{W}$---each of which linearly divides the word-representation space into two parts: words that are in an RC and words that are not. From the representation $\vec{h_t}$ of a word $t$ that is not in an RC, we can generate $\vec{h_t^+}$ by pushing $\vec{h_t}$ across the separating plane towards the representations of words that are inside an RC . Similarly, we can generate $\vec{h_t^-}$ by moving $\vec{h_t}$ \textit{further away} from that plane (see Figure~\ref{fig:counterfactuals}).\footnote{For a word $t$ that is \textit{inside} an RC, the reverse computations would be required: to generate $h_t^+$ we would move $\vec{h_t}$ further away from the separating plane, whereas to generate $h_t^-$ we would move $\vec{h_t}$ across the separating plane.}

How do we move the representation of a word away from or towards the separating planes? Recall that the feature subspace $R$ and the nullspace $N$ are orthogonal complements, and consequently any vector $\vec{v} \in \R^n$ can be represented as a sum of its projections on $R$ and $N$. Further, by definition, the vector's projection on $R$ is the sum of its projections on the RC directions $\vec{w} \in \matr{W}$. Thus, we can decompose $\vec{h_t}$ as follows, where $\vec{h_t^w}$ is the orthogonal projection of $\vec{h_t}$ on direction $\vec{w}$:

\begin{equation}
    \vec{h_t} = \vec{h_t^N} + \vec{h_t^R} = \vec{h_t^N}  + \sum_{\vec{w} \in W}{\vec{h_t^w}}
    \label{decomposition-equation}
\end{equation}

For any word $t$, we expect a positive counterfactual $\vec{h_t^{+}}$ to be classified as being \emph{inside} an RC, with high confidence, according to all \textit{original} RC directions $w \in \matr{W}$ --- that is, $\forall\ w \in \matr{W}, w^T\vec{h_t^{+}}>0 $. Conversely, we expect a negative counterfactual to be classified as \emph{not} being in an RC, i.e., $\forall\ w \in \matr{W}, w^T\vec{h_t^{-}} < 0$.

To enforce these desiderata, we create positive and negative counterfactuals as follows, where $\textit{SIGN}(x) = 1$ if $x \geq 0$ and $0$ otherwise, and $\alpha$ is a positive scalar hyperparameter that enhances or dampens the effect of the intervention.

\begin{align}
    \vec{h_t^{-}} &= \vec{h_t^{N}} + \alpha \sum_{\vec{w} \in \matr{W}}(-1)^{\textit{SIGN}(w^T\vec{h_t})} \vec{h_t^w} \label{eq:counterfactuals-neg}\\  
    \vec{h_t^{+}} &= \vec{h_t^{N}} + \alpha \sum_{\vec{w} \in \matr{W}}(-1)^{1 - \textit{SIGN}(w^T\vec{h_t})} \vec{h_t^w} \label{eq:counterfactuals-pos}
\end{align}

In both cases, we \emph{subtract} a direction $\vec{h_t^w}$, flipping its sign, if the sign constraints are violated, that is, if $w^T\vec{h_t} > 0$ for $\vec{h_t^{-}}$ and if $w^T\vec{h_t} < 0$ for $\vec{h_t^{+}}$. Geometrically, flipping the sign of a direction $\vec{h_t^w}$ in Equations~\ref{eq:counterfactuals-neg} and~\ref{eq:counterfactuals-pos} is equivalent to taking a \textit{mirror image} with respect to a direction $w$ (Figure~\ref{fig:counterfactuals}). This enforces the sign constraints: all classifiers $w$ predict the negative or positive class, respectively (see Appendix~\S\ref{app:proofs} for a formal proof).


\section{Experimental Setup}
Our overall goal is to assess the causal effect of RC boundary representations on our models' agreement behavior when subject-verb dependencies span an RC (that is, where an RC intervenes between the head of the subject and the corresponding verb). We test whether we can modify the representation of the masked verb outside the RC such that, compared to the original representations, the model assigns higher probability to either the correct form (after negative intervention) or to the incorrect one (after positive intervention). We first describe the models we use (\S\ref{sec:models}), then the dataset we use to obtain RC subspaces and generate counterfactual representations (\S\ref{sec:methods-counterfactual}), and finally the dataset we use to measure the models' agreement prediction accuracy in sentences containing RCs, before and after the counterfactual intervention (\S\ref{sec:methods-agreement}).

\subsection{Models}
\label{sec:models}
We use BERT-base (12 layers,768 hidden units) and BERT-large (24 layers, 1024 hidden units) \cite{bert}, as well as the smaller BERT models released by \citet{DBLP:journals/corr/abs-1908-08962}: BERT-medium (8 layers $\times$ 512 hidden units), BERT-small (4 layers, 512 hidden units), BERT-mini (4 $\times$ 256), and BERT-tiny (2 $\times$ 128). In all experiments, we intervene on a single layer at a time, and continue the forward pass of the original model through the following layers.

\subsection{Generating Counterfactual Representations}
\label{sec:methods-counterfactual}

\paragraph{Datasets}

To create the training data for the INLP classifiers, we used the templates of \citet{prasad-etal-2019-using} to generate five lexically matched sets of semantically plausible sentences, one for each type of RC outlined in Table~\ref{tab:ex}, as well as two additional sets of sentences without RCs; these included sentences with nearly the same word order and lexical content as the sentences in the other sets. Each set contained $4800$ sentences.
All verbs in the training sentences were in the past tense; this ensured that the subspaces we identified did not contain information about overt number agreement, making it unlikely that \method\ will alter agreement-related information that does not concern RCs.

\paragraph{Identifying and Altering RC Subspaces}
To identify RC subspaces, we used INLP with SVM classifiers as implemented in scikit-learn. We identified different subspaces for each of the five types of RCs listed in Table~\ref{tab:ex}. For example, in \ref{ex:src2}, the bolded words were considered to be in the RC.
\ex. \label{ex:src2} My cousin \textbf{that liked the book} hated movies.

For the negative examples, we took representations of words outside of the RC, either from outside the bolded region of the same sentence, or from inside or outside the bolded region of the coordination control sentence.

\ex. \label{ex:scont} My cousin \textbf{liked the book} and hated movies. 

We selected the negative examples in this manner for two reasons: first, to ensure that the same word served as a positive example in some context and as a negative example in others (e.g., \textit{book} in \ref{ex:src2} and \ref{ex:scont}); and second, to ensure that the same RC sentence included both positive and negative examples (e.g., \textit{book} and \textit{cousin} in \ref{ex:src2}).

\paragraph{Hyperparameters} INLP has a hyperparameter $m$ which sets the dimensionality of the RC subspace; this parameter trades off exhaustivity against selectivity.\footnote{In particular, running INLP for 768 iterations---the dimensionality of BERT representations---yields the original space, which is exhaustive but not useful in distilling RC information.} We set $m=8$;  In Appendix \S\ref{app:iters} we demonstrate that the trends we observe are not substantially affected by this parameter. 


\method\ has an hyperparameter, $\alpha$, that determines the magnitude of the counterfactual intervention (\S\ref{sec:methods-counterfactual}). We use $\alpha=4$; In Appendix \S\ref{app:alpha} we show that the trends we observe are similar for other values of $\alpha$.



\subsection{Measuring the Effect of the  Intervention on Agreement Accuracy}
\label{sec:methods-agreement}

\paragraph{Dataset}

We measure the models' agreement prediction accuracy using 
a subset of the \citet{marvin-linzen-2018-targeted} dataset in which the subject is modified by an RC. The noun inside the RC either matched \ref{ex:orc_agreement_match} or mismatched \ref{ex:orc_agreement_mismatch} the subject of the matrix clause in number:

\ex. \label{ex:orc_agreement_match} The skater \textbf{that the officer loves} is/are happy.

\ex. \label{ex:orc_agreement_mismatch} The skater \textbf{that the officers love} is/are happy.

The \citeauthor{marvin-linzen-2018-targeted} dataset contains sentences where the intervening RC is either a subject RC or a (reduced or unreduced) object RC. We augmented this dataset with lexically matched sentences with (reduced or unreduced) passive RC interveners, using attribute varying grammars \citep{mueller2020crosslinguistic}. Finally, we only considered sentences with copular main verbs (\textit{is} and \textit{are}) to ensure that both the singular and plural forms of the verb are highly frequent. We used 1750 sentences per construction.

\begin{figure*}[t]

    \centering
    \begin{subfigure}[t]{0.48\textwidth}
          \captionsetup{width=.9\textwidth}
            \includegraphics[width=\linewidth]{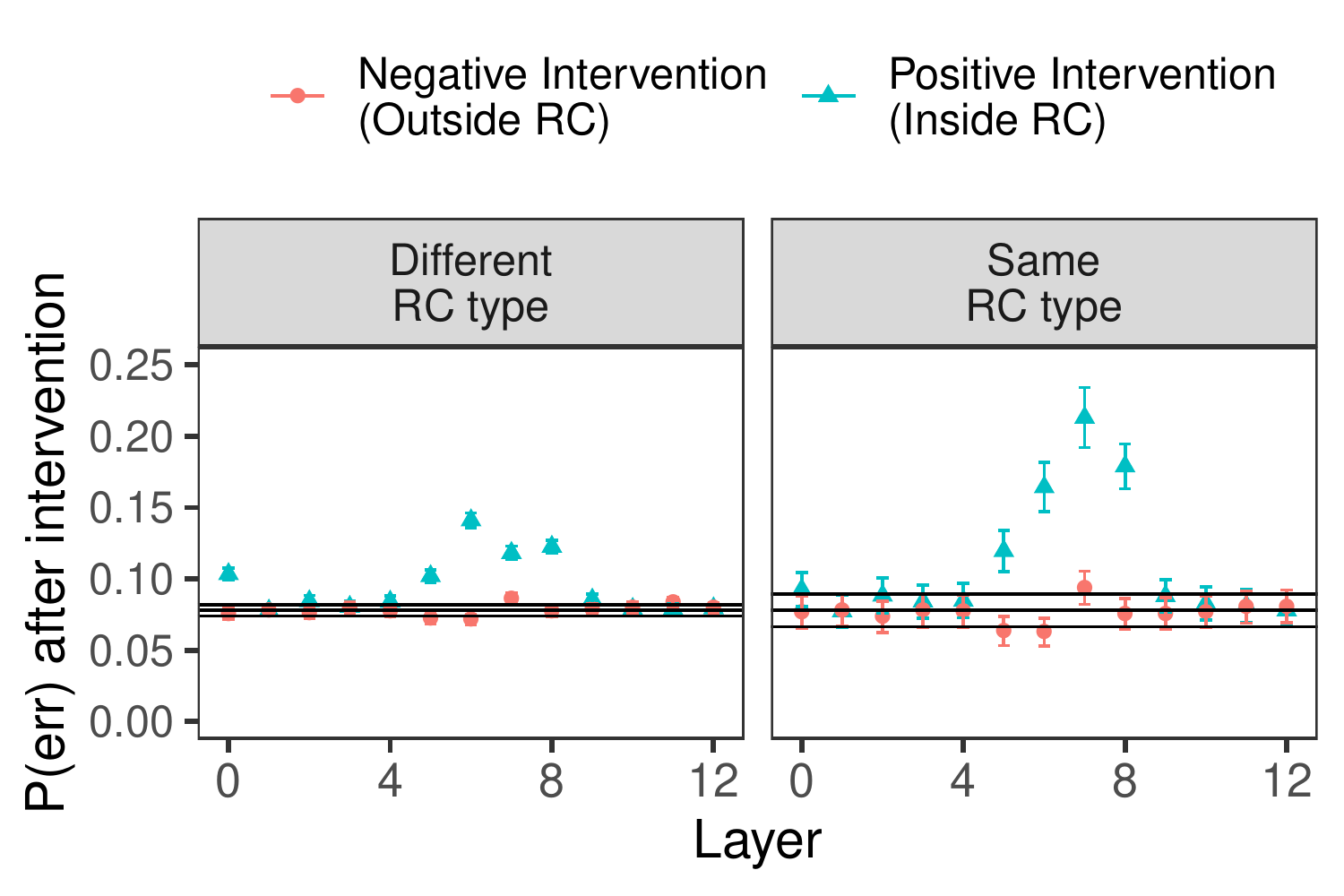}
            \caption{RC sentences with attractors. In the right panel, the test sentence included an RC of the type used to generate the counterfactual representations; 
            in the left panel, counterfactual representations were generated based on sentences with different RC types from those in the agreement test sentences. 
           } \label{fig:bert_base_perr_same_vs_diff}
    \end{subfigure}
    \begin{subfigure}[t]{0.48\textwidth}
            \captionsetup{width=.9\textwidth}
            \includegraphics[width=\linewidth]{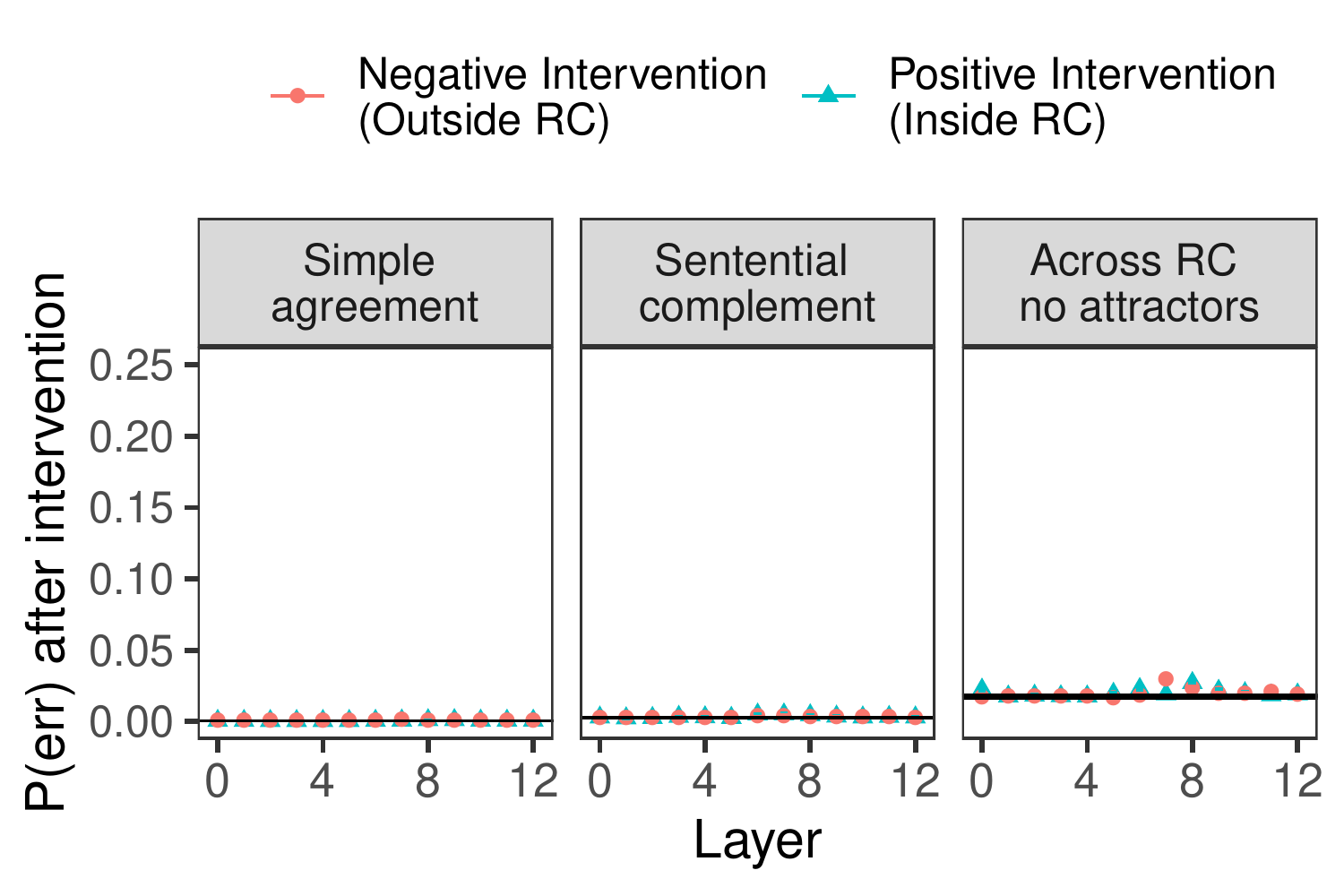}
            \caption{Sentences without RCs and sentences with an RC but without attractors.} \label{fig:bert_base_perr_nonattractorRC}
    \end{subfigure}
    \begin{subfigure}[t]{0.48\textwidth}
              \captionsetup{width=.9\textwidth}
        \includegraphics[width=\linewidth]{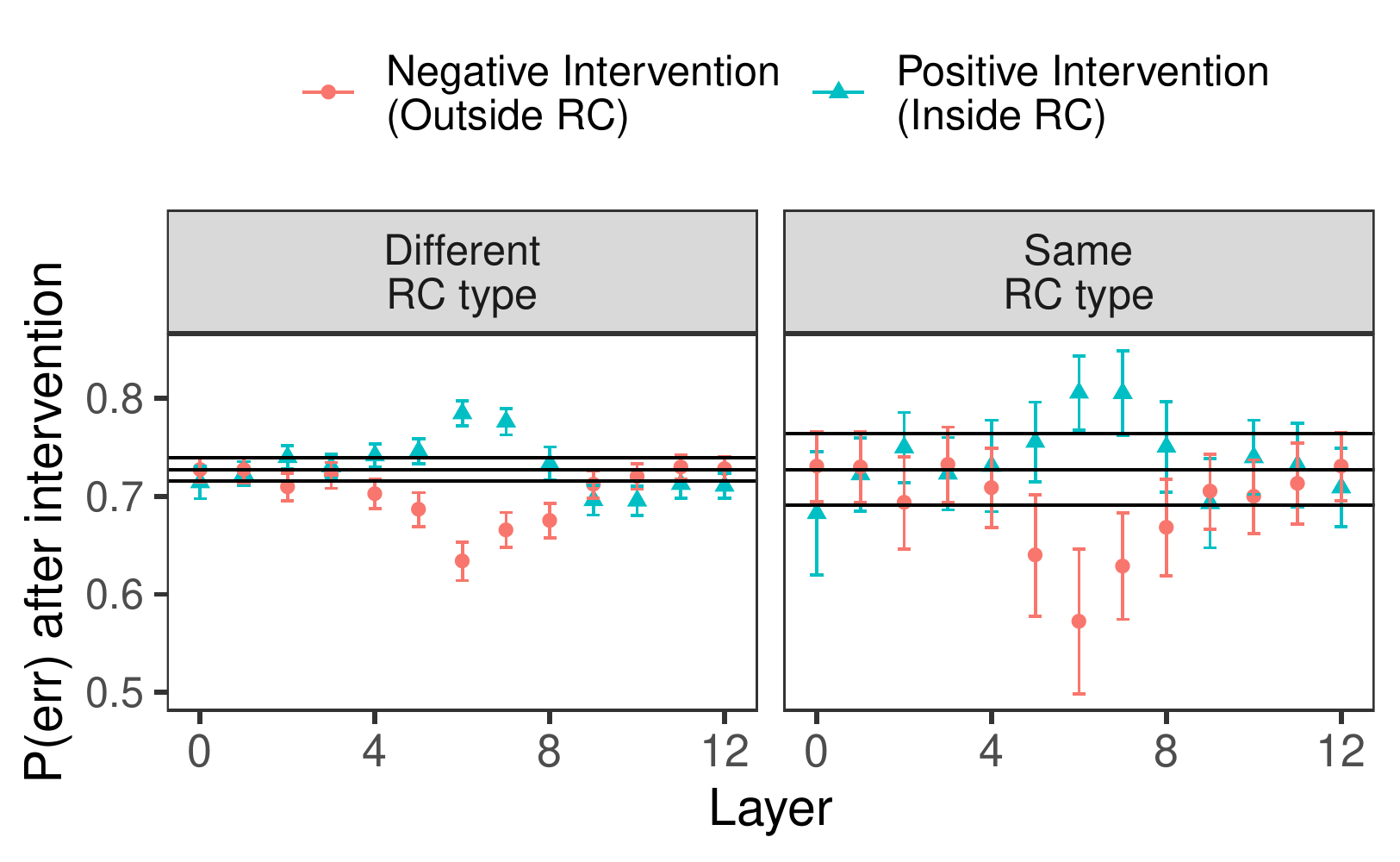}
        \caption{Sentences where before the intervention the model assigned a higher probability to the ungrammatical than the grammatical verb. Note the y-axis differs from other plots (reflecting the higher original error probability).}
        \label{fig:only_wrong_perr}
    \end{subfigure}
    \begin{subfigure}[t]{0.48\textwidth}
    \captionsetup{width=.9\textwidth}
        \includegraphics[width=\linewidth]{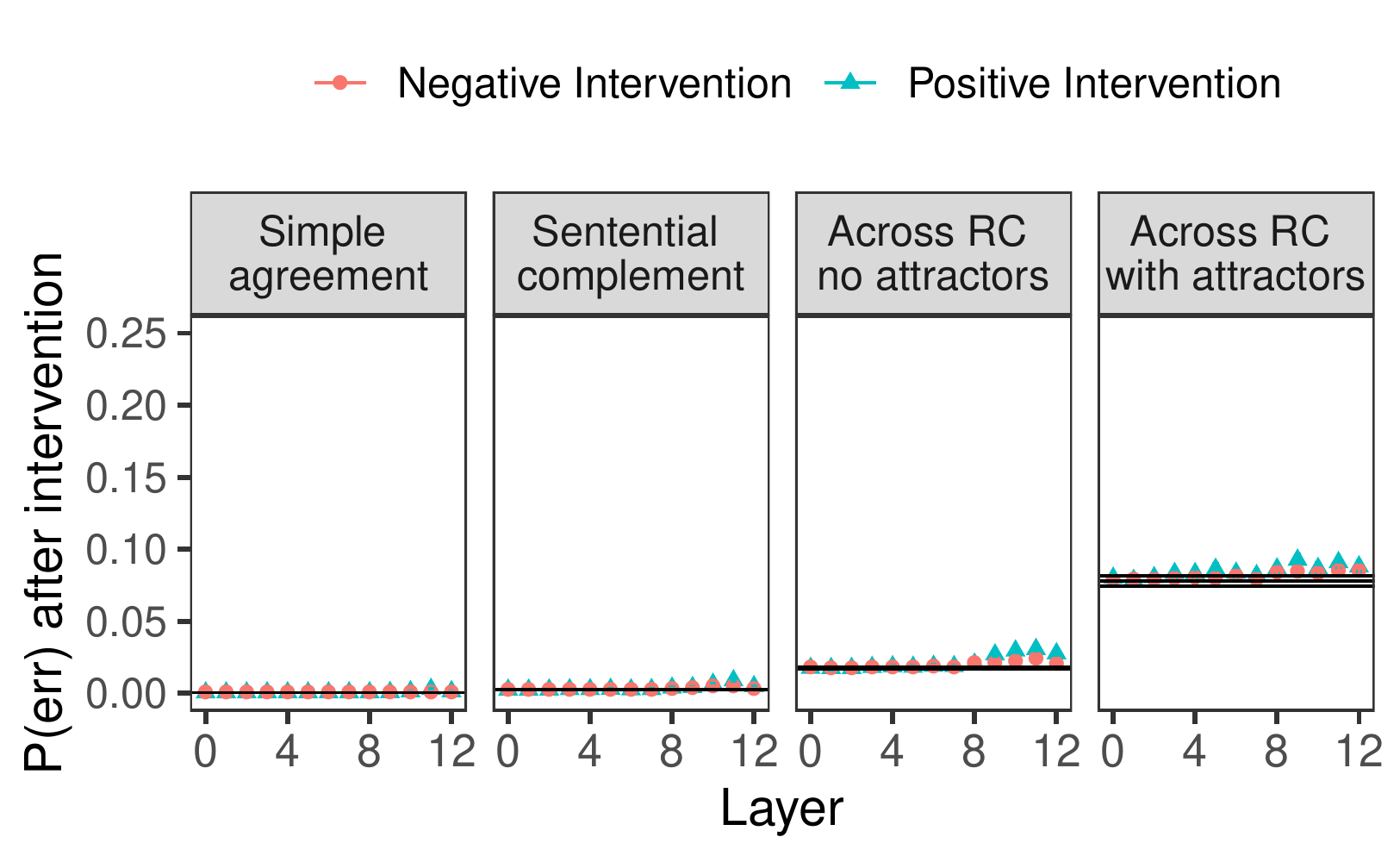}
        \caption{Intervention from counterfactual representations generated from 10 random subspaces. }
        \label{fig:perr_random_subspace}
    \end{subfigure}
    \caption{\label{fig:agreement_bert_base}Change in 
           probability of error with negative and positive counterfactual BERT-base representations (red circle and cyan triangle respectively). Horizontal lines indicate probability of error with the original representations without any intervention: the middle line is the mean accuracy across all items prior to intervention and the upper and lower lines indicate accuracy two standard errors away from the mean accuracy. Error bars reflect two standard errors from the mean probability of error after intervention. } 
\end{figure*} 

\paragraph{Computing Agreement Accuracy}

We performed masked language modeling (MLM) on the dataset described earlier in this section. In each sentence, we masked the copula, started the forward pass, performed the intervention on the representation of the masked copula in the layer of interest, and continued with the forward pass to obtain BERT's distribution over the vocabulary for the masked token. We repeated this process for each layer separately. We then computed the probability of error, normalized within the two copulas \emph{is} and \emph{are} (\citealt{arehalli20}):

\begin{equation}
P(\textit{Err}) = \dfrac{P(\textit{Verb}_{\textit{Incorrect}})}{P(\textit{Verb}_{\textit{Incorrect}}) +P(\textit{Verb}_{\textit{Correct}})}
\end{equation}

In Appendix~\S\ref{app:accuracy}, we present results where the metric of success is accuracy, that is, the percentage of cases where the model assigned a higher probability to the verb with the correct number \cite{marvin-linzen-2018-targeted}. These results are qualitatively similar.

\section{Predictions}
\label{src:predictions}

As discussed earlier, a system that computed agreement in accordance with the grammar of English would determine the number of the masked verb in a sentence like \ref{ex:src_agreement2} based on the number of \textit{officers}, because both \textit{officers} and the \mask\ token are outside the RC; the number of the RC-internal noun \textit{skater} should be ignored. 
\ex. \label{ex:src_agreement2} The officers \textbf{that love the skater} \mask\ nice. 

We can derive the following predictions for applying \method\ to a system that follows this strategy:

\paragraph{Prediction 1: Impact on Error Probability in RC Sentences with Attractors.} 
In RC sentences where the main clause subject differs in number from the RC subject, error probability will be higher with the counterfactual $h_{\textit{MASK}}^+$, which encodes (incorrectly) that \mask\ is \textit{inside} the RC, than with the original representation $h_{\textit{MASK}}$. Conversely, error probability will be lower with $h_{\textit{MASK}}^-$, which encodes (correctly) that \mask\ is \textit{outside} the RC, than with the original $h_{\textit{MASK}}$. 

\paragraph{Prediction 2: No Impact on Other Sentences.}
We do not expect a difference in error probability between the original and counterfactual representations in all other sentences. This should be the case for sentences with RCs where the nouns inside and outside the RC match in number, as in~\ref{ex:src_agreement2.2}:

\ex. \label{ex:src_agreement2.2} The officers \textbf{that love the skaters} \mask\  nice. 

Since both \textit{officers} and \textit{skaters} are plural, most plausible agreement prediction strategies would make the same predictions regardless of whether \mask\ is analyzed as being inside the RC or outside it. Consequently, intervening on the encoding of RC boundaries is not expected to systematically change the model's predictions.

Likewise, since the interventions are designed to modulate the encoding of RC-related properties, we do not expect the interventions to impact number prediction in sentences without RCs such as \ref{ex:simple_agreement} and \ref{ex:sent_comp}:\footnote{If models encoded boundaries of all embedded clauses similarly we would expect a change in prediction for \ref{ex:sent_comp}.}

\ex. \label{ex:simple_agreement} The officer \mask\ nice.

\ex. \label{ex:sent_comp} The bankers knew the officer \mask\ nice.

\paragraph{Prediction 3: Generalization Across RC Types.}
If RC boundaries are represented in an abstract way that is shared across different RC types, then the counterfactual representations will affect error probability in the same way regardless of whether the counterfactual representations were generated from subspaces estimated from sentence with the same RC type as the target sentences, or from sentences with different RC types.



\section{Results}
\label{intervention_results}

\paragraph{Counterfactual Intervention in the Middle Layers of BERT-base Modulates Agreement Error Rate in RC Sentences with Attractors.}

We begin by discussing experiments where subspaces were estimated from sentences with the same type of RC as the test sentences with agreement; we report results averaged across the five RC types.
Interventions using counterfactual representations generated from the middle layers of the BERT-base (5--8 out of~12) resulted in changes in the probability of error which partially aligned with Prediction~1 (Figure~\ref{fig:bert_base_perr_same_vs_diff}). In sentences with attractors, using the positive counterfactual $h_{\textit{MASK}}^+$ resulted in an increase in the probability of error (a maximum increase of 14 percentage points in layer~7). Conversely, using the negative counterfactual $h_{\textit{MASK}}^-$ generated from layers~5  and~6 resulted in a decrease in the probability of error. This decrease was much smaller (a maximum decrease of~2 percentage point in layer~6) and there was overlap in the error bars for the probability of error before and after intervention. 

It is likely that the smaller effects of the negative counterfactual intervention are due to the fact that accuracy before the intervention was very high (95\%) and the probability of error very low (8\%), leaving very little room for change: in most cases, the original representation already correctly encoded the verb is outside the RC. In a follow-up analysis, we only considered sentences in which the model originally assigned a higher probability to the ungrammatical than the grammatical form. In these examples the decrease in probability of error was larger (a maximum decrease of~16 percentage points in layer~6; see Figure~\ref{fig:only_wrong_perr}). 

While only RC interventions in the middle layers elicit the expected behavioral outcomes, probing accuracy for RC information was high for \text{all} layers (Appendix \S\ref{app:probing}), giving further evidence to the dissociation between correlational and causal methods: probing can identify aspects of the representations that do not affect the model's behavior.

\paragraph{Interventions on RC Boundary Representations Generalize Across RC Types, but not Further.} In line with Prediction 3, we observed a qualitatively similar pattern of change in error probabilities even when the counterfactuals were generated from subspaces estimated from a \textit{different} RC type than the RC in the agreement test sentences. The effects were smaller, however. This suggests that while BERT's representation of RC boundaries is partly shared across different RC types, 
there are also structure-specific RC boundary representations. 
The effect of the intervention also aligned with Prediction~2: in constructions where we do not expect RC boundaries to affect predictions---sentences without attractors and those without RCs---we did not observe significant changes in error probability (Figure~\ref{fig:bert_base_perr_nonattractorRC}). 

\paragraph{Intervention Based on Random Subspaces Does Not Produce Interpretable Results.}

To tease apart the effect of the RC-targeted intervention from intervening on \textit{any} subspace with the same dimensionality, we generated counterfactual representations from 10 random subspaces and repeated our analysis.\footnote{We generated a random subspace by sampling standard Gaussian vectors instead of the INLP matrix $\matr{W}$, and then employing the same procedure described in \S\ref{sec:counterfactual}.} While we observed very small changes in probability of error in some cases, the pattern of changes resulting from this intervention did not align with any of our predictions (see Figure~\ref{fig:perr_random_subspace}). This suggests that the change in probability of error that resulted from intervening with RC subspaces was not merely a by-product of intervening on a large enough subspace of BERT's original representation space.

\paragraph{Intervening on the Middle Layers of Other BERT Variants Yielded Qualitatively Similar Results.}
\begin{figure}
    \centering
    \includegraphics[width=0.45\textwidth]{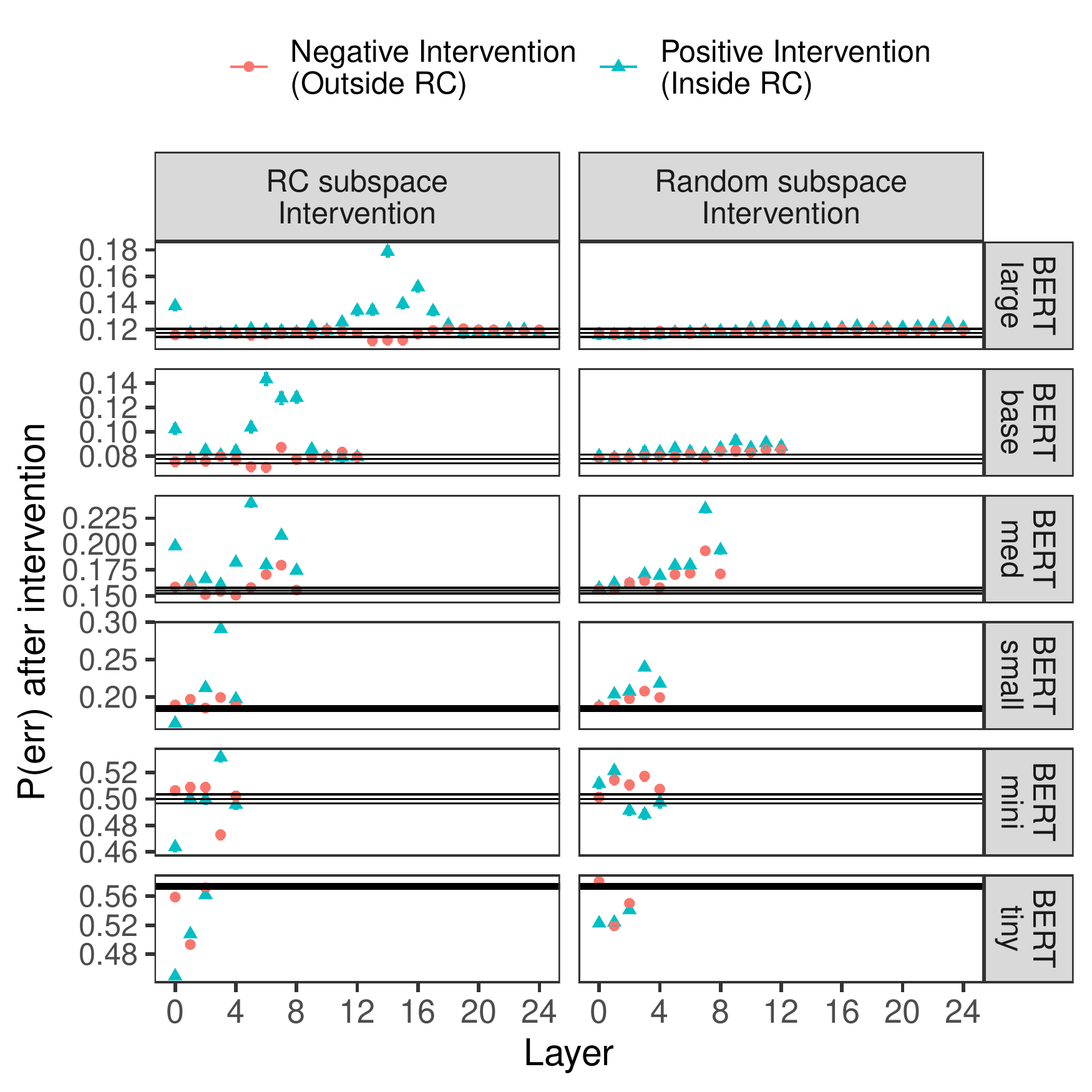}
    \caption{Change caused by counterfactual representations in agreement error probability across RCs with attractors for different BERT variants. Note that the baseline performance prior to intervention (marked by black horizontal lines) is different between models.}
    \label{fig:other-berts}
\end{figure}

We repeated the experiments on BERT-large and four smaller versions of BERT, trained on the same amount of data as the BERT-base model \cite{DBLP:journals/corr/abs-1908-08962}. As with BERT-base, intervening on the middle layers of BERT-large (12--17 out of~24) with the RC subspaces---but not the random subspaces---resulted in predicted changes in the probability of error. Compared to BERT-base, the smaller models showed a greater change in the probability of error as a result of intervention with counterfactuals generated from random subspaces. However, when the counterfactual representations were generated from particular layers---4 and~5 (out of~8) in BERT-medium, 3 (out of~4) in BERT-mini and 2 (out of~4) in BERT-small---the change in error probability aligned with Prediction~1 over and above the changes from intervening with random subspaces. In all of these layers, intervening with the positive but not the negative counterfactual resulted in an increase in the probability of error. No such layer was observed for BERT-tiny, which has only~2 layers (see Figure~\ref{fig:other-berts}). 

\section{Discussion}
We proposed an intervention-based method, \method, to test whether language models use the linguistic information encoded in their representations in a manner that is consistent with the grammar of the language they are trained on. For a given linguistic feature of interest, we generated counterfactual contextual word representations by manipulating the value of the feature in the original representations. Then, by replacing the original representations with these counterfactual variants, we characterized the change in word prediction behaviour. By comparing the resulting change in word prediction with hypotheses from linguistic theory about how specific values of the feature are expected to influence the probabilities over predicted words, we investigated whether the model \textit{uses} the feature as expected. 

As a case study, we applied this method to study whether altering the information encoded about RC boundaries in the contextual representations of masked verbs in different BERT variants influences the verb's number inflection in a manner that is consistent with the grammar of English. We found that while all layers of the BERT variants encoded information about RC boundaries, only the information in the middle layers influenced the masked verb's number inflection as predicted by English grammar. We also found that in BERT-base, counterfactual representations based on subspaces that were learned from sentences with one type of RC influenced the number inflection of the masked verb in sentences with other types of RCs; this suggests that the model encodes information about RC boundaries in an abstract manner that generalizes across the different RC types. 

\paragraph{Caveat: Linear Analysis of a Non-linear Network} \method\ interventions are based on concept subspaces identified using linear classifiers, but most neural networks components, including BERT layers, are non-linear. It is possible, then, that subsequent non-linear layers transform the counterfactual representation in a way that is not amenable to analysis using our methods. As such, while we can conclude from a positive result that the feature in question causally affects the model's behavior, negative results should be interpreted more cautiously.

\paragraph{Future Work} Future work can apply this method to test linguistically motivated hypotheses about a wide range of structures and tasks. For example, linguistic theory predicts that information about semantic roles (like agent and patient) is crucial for tasks such as natural language inference (NLI) that require reasoning about sentence meaning. To test if NLI models use semantic roles as predicted by linguistic theory, we can use \method\ to replace the original representations with counterfactual representations where the patient is encoded as the agent (and vice versa), and measure the change in performance on NLI, especially on challenge sets such as HANS \cite{mccoy2019right} that evaluate sensitivity to these properties.

\section{Related Work}
\label{related-work}


\paragraph{Probing and Causal Analysis} Behavioral tests of neural models, such as the ability of the model to master agreement prediction  \citep{linzen2016assessing, gulordava2018LMagreement, goldberg2019bert_agreemenet}, have exposed both impressive capabilities and limitations. These paradigms focus on the model's output, and do not link the behavioral output with the information encoded in its representations. Conversely, probing \citep{diagnostic_adi, cram_vectors_conneau, hupkes2018visualisation}
does not reveal whether the property recovered by the probe affects the original model's prediction in any way \citep{hewitt2019control, tamkin2020investigating, abilasha_probing}. This has sparked interest in identifying the \emph{causal} factors that underlie the model's behavior \citep{DBLP:journals/corr/abs-2004-12265, DBLP:journals/corr/abs-2005-13407, DBLP:journals/corr/abs-2010-10907, DBLP:conf/iclr/KaushikHL20, DBLP:conf/naacl/SlobodkinCA21, DBLP:conf/naacl/PryzantCJVS21,finlayson-etal-2021-causal}.

\paragraph{Counterfactuals} The relation between counterfactual reasoning and causality is extensively discussed in social science and philosophy literature \cite{woodward2005making, DBLP:journals/corr/abs-1811-03163, DBLP:journals/ai/Miller19}. Attempts have been made to generate counterfactual examples \cite{DBLP:conf/emnlp/MaudslayGCT19, DBLP:conf/acl/ZmigrodMWC19, DBLP:journals/corr/abs-2012-13985, DBLP:conf/iclr/KaushikHL20, DBLP:journals/corr/abs-2103-13701} and recently to derive counterfactual representations \cite{DBLP:journals/corr/abs-2005-13407, amnesic-probing, DBLP:journals/corr/abs-2103-01378, DBLP:conf/emnlp/ShinSJKJM20, DBLP:journals/corr/abs-2105-14002}. Contrary to our approach, previous attempts to generate counterfactual representations were either limited to amnesic operations (i.e., focused on the \textit{removal} of information and not on \textit{modifying} the encoded information) or used gradient-based interventions, which are expressive and powerful, but less controllable. Our linear approach is guided by well-defined desiderata: we want all linear classifiers trained on the original representation to predict a specific class for the counterfactual representations, and we \textit{prove} that is the case in Appendix \S\ref{app:proofs}. 

\paragraph{Representations and Behavior} Previous work bridging the gap between representations and behavior includes \citet{GiulianelliHMHZ18}, who demonstrated that back-propagating an agreement probe into a language model induces behavioral changes and improve predictions. \citet{lakretz2019emergence} identified individual neurons that causally support agreement prediction. \citet{prasad-etal-2019-using} used similarity measures between different RC types extracted using behavioural methods to investigate the inner organization of information within the model.
Closest to our work is  \citet{amnesic-probing}, where the authors applied INLP to ``erase" certain distinctions from the representation, and then measured the effect of the intervention on language modeling. We extend INLP to generate flexible counterfactual representations (\S\ref{sec:counterfactual}) and use these to instantiate hypotheses about the linguistic factors that guide the model's behavior.


\section{Conclusions}

We proposed an intervention-based approach to study whether a model uses a particular linguistic feature as predicted by the grammar of the language it was trained on. To do so, we generated counterfactual representations in which the linguistic property under consideration was altered but all other aspects of the representation remained intact. Then, we replaced the original word  representation with the counterfactual one and characterized the change in behaviour. Applying this method to BERT, we found that the model uses information about RC boundaries that is encoded in its word representations when inflecting the number of masked verb in a manner consistent with the grammar of English. We conclude that \method\  is an effective tool for testing hypotheses about the function of the linguistic information encoded in the internal representations of neural LMs. 

\section*{Acknowledgements}

This work was supported by United States--Israel Binational Science Foundation award 2018284, and has received funding from the European Research Council (ERC) under the European Union's Horizon 2020 research and innovation programme, grant agreement No. 802774 (iEXTRACT). We thank Robert Frank for a fruitful discussion of an early version of this work, and Marius Mosbach, Hila Gonen and Yanai Elazar for their helpful comments.

\bibliography{emnlp2021}

\begin{thebibliography}{41}
\expandafter\ifx\csname natexlab\endcsname\relax\def\natexlab#1{#1}\fi

\bibitem[{Adi et~al.(2017)Adi, Kermany, Belinkov, Lavi, and
  Goldberg}]{diagnostic_adi}
Yossi Adi, Einat Kermany, Yonatan Belinkov, Ofer Lavi, and Yoav Goldberg. 2017.
\newblock \href {https://openreview.net/forum?id=BJh6Ztuxl} {Fine-grained
  analysis of sentence embeddings using auxiliary prediction tasks}.
\newblock In \emph{5th International Conference on Learning Representations,
  {ICLR} 2017, Toulon, France, April 24-26, 2017, Conference Track
  Proceedings}. OpenReview.net.

\bibitem[{Arehalli and Linzen(2020)}]{arehalli20}
Suhas Arehalli and Tal Linzen. 2020.
\newblock Neural language models capture some, but not all, agreement
  attraction effects.
\newblock In \emph{{Proceedings of the 42nd Annual Conference of the Cognitive
  Science Society}}, pages 370---376.

\bibitem[{Conneau et~al.(2018)Conneau, Kruszewski, Lample, Barrault, and
  Baroni}]{cram_vectors_conneau}
Alexis Conneau, Germ{\'a}n Kruszewski, Guillaume Lample, Lo{\"\i}c Barrault,
  and Marco Baroni. 2018.
\newblock What you can cram into a single \$\&!\#* vector: Probing sentence
  embeddings for linguistic properties.
\newblock In \emph{Proceedings of the 56th Annual Meeting of the Association
  for Computational Linguistics (Volume 1: Long Papers)}, pages 2126--2136.

\bibitem[{Devlin et~al.(2019)Devlin, Chang, Lee, and Toutanova}]{bert}
Jacob Devlin, Ming{-}Wei Chang, Kenton Lee, and Kristina Toutanova. 2019.
\newblock \href {https://doi.org/10.18653/v1/n19-1423} {{BERT:} pre-training of
  deep bidirectional transformers for language understanding}.
\newblock In \emph{Proceedings of the 2019 Conference of the North American
  Chapter of the Association for Computational Linguistics: Human Language
  Technologies, {NAACL-HLT} 2019, Minneapolis, MN, USA, June 2-7, 2019, Volume
  1 (Long and Short Papers)}, pages 4171--4186. Association for Computational
  Linguistics.

\bibitem[{Elazar et~al.(2021)Elazar, Ravfogel, Jacovi, and
  Goldberg}]{amnesic-probing}
Yanai Elazar, Shauli Ravfogel, Alon Jacovi, and Yoav Goldberg. 2021.
\newblock \href {https://transacl.org/ojs/index.php/tacl/article/view/2423}
  {Amnesic probing: Behavioral explanation with amnesic counterfactuals}.
\newblock \emph{Transactions of the Association for Computational Linguistics},
  9:160--175.

\bibitem[{Feder et~al.(2020)Feder, Oved, Shalit, and
  Reichart}]{DBLP:journals/corr/abs-2005-13407}
Amir Feder, Nadav Oved, Uri Shalit, and Roi Reichart. 2020.
\newblock \href {http://arxiv.org/abs/2005.13407} {{CausaLM}: Causal model
  explanation through counterfactual language models}.
\newblock \emph{CoRR}, abs/2005.13407.

\bibitem[{Finlayson et~al.(2021)Finlayson, Mueller, Gehrmann, Shieber, Linzen,
  and Belinkov}]{finlayson-etal-2021-causal}
Matthew Finlayson, Aaron Mueller, Sebastian Gehrmann, Stuart Shieber, Tal
  Linzen, and Yonatan Belinkov. 2021.
\newblock \href {https://doi.org/10.18653/v1/2021.acl-long.144} {Causal
  analysis of syntactic agreement mechanisms in neural language models}.
\newblock In \emph{Proceedings of the 59th Annual Meeting of the Association
  for Computational Linguistics and the 11th International Joint Conference on
  Natural Language Processing (Volume 1: Long Papers)}, pages 1828--1843,
  Online. Association for Computational Linguistics.

\bibitem[{Gauthier et~al.(2020)Gauthier, Hu, Wilcox, Qian, and
  Levy}]{gauthier2020syntaxgym}
Jon Gauthier, Jennifer Hu, Ethan Wilcox, Peng Qian, and Roger Levy. 2020.
\newblock Syntaxgym: An online platform for targeted evaluation of language
  models.
\newblock In \emph{Proceedings of the 58th Annual Meeting of the Association
  for Computational Linguistics: System Demonstrations}, pages 70--76.

\bibitem[{Giulianelli et~al.(2018)Giulianelli, Harding, Mohnert, Hupkes, and
  Zuidema}]{GiulianelliHMHZ18}
Mario Giulianelli, Jack Harding, Florian Mohnert, Dieuwke Hupkes, and Willem~H.
  Zuidema. 2018.
\newblock \href {https://doi.org/10.18653/v1/w18-5426} {Under the hood: Using
  diagnostic classifiers to investigate and improve how language models track
  agreement information}.
\newblock In \emph{Proceedings of the Workshop: Analyzing and Interpreting
  Neural Networks for NLP, BlackboxNLP@EMNLP 2018, Brussels, Belgium, November
  1, 2018}, pages 240--248. Association for Computational Linguistics.

\bibitem[{Goldberg(2019)}]{goldberg2019bert_agreemenet}
Yoav Goldberg. 2019.
\newblock \href {http://arxiv.org/abs/1901.05287} {Assessing {BERT}'s syntactic
  abilities}.
\newblock \emph{CoRR}, abs/1901.05287.

\bibitem[{Gulordava et~al.(2018)Gulordava, Bojanowski, Grave, Linzen, and
  Baroni}]{gulordava2018LMagreement}
Kristina Gulordava, Piotr Bojanowski, Edouard Grave, Tal Linzen, and Marco
  Baroni. 2018.
\newblock \href {https://aclanthology.info/papers/N18-1108/n18-1108} {Colorless
  green recurrent networks dream hierarchically}.
\newblock In \emph{Proceedings of the Conference of the North American Chapter
  of the Association for Computational Linguistics: Human Language
  Technologies, {NAACL-HLT}}, pages 1195--1205.

\bibitem[{Hewitt and Liang(2019)}]{hewitt2019control}
John Hewitt and Percy Liang. 2019.
\newblock \href {https://doi.org/10.18653/v1/D19-1275} {Designing and
  interpreting probes with control tasks}.
\newblock In \emph{Proceedings of the 2019 Conference on Empirical Methods in
  Natural Language Processing and the 9th International Joint Conference on
  Natural Language Processing (EMNLP-IJCNLP)}, pages 2733--2743, Hong Kong,
  China. Association for Computational Linguistics.

\bibitem[{Hupkes et~al.(2018)Hupkes, Veldhoen, and
  Zuidema}]{hupkes2018visualisation}
Dieuwke Hupkes, Sara Veldhoen, and Willem Zuidema. 2018.
\newblock Visualisation and `diagnostic classifiers' reveal how recurrent and
  recursive neural networks process hierarchical structure.
\newblock \emph{Journal of Artificial Intelligence Research}, 61:907--926.

\bibitem[{Hvilsh{\o}j et~al.(2021)Hvilsh{\o}j, Iosifidis, and
  Assent}]{DBLP:journals/corr/abs-2103-13701}
Frederik Hvilsh{\o}j, Alexandros Iosifidis, and Ira Assent. 2021.
\newblock \href {http://arxiv.org/abs/2103.13701} {{ECINN:} efficient
  counterfactuals from invertible neural networks}.
\newblock \emph{CoRR}, abs/2103.13701.

\bibitem[{Jacovi et~al.(2021)Jacovi, Swayamdipta, Ravfogel, Elazar, Choi, and
  Goldberg}]{DBLP:journals/corr/abs-2103-01378}
Alon Jacovi, Swabha Swayamdipta, Shauli Ravfogel, Yanai Elazar, Yejin Choi, and
  Yoav Goldberg. 2021.
\newblock \href {http://arxiv.org/abs/2103.01378} {Contrastive explanations for
  model interpretability}.
\newblock \emph{CoRR}, abs/2103.01378.

\bibitem[{Kaushik et~al.(2020)Kaushik, Hovy, and
  Lipton}]{DBLP:conf/iclr/KaushikHL20}
Divyansh Kaushik, Eduard~H. Hovy, and Zachary~Chase Lipton. 2020.
\newblock \href {https://openreview.net/forum?id=Sklgs0NFvr} {Learning the
  difference that makes {A} difference with counterfactually-augmented data}.
\newblock In \emph{8th International Conference on Learning Representations,
  {ICLR} 2020, Addis Ababa, Ethiopia, April 26-30, 2020}. OpenReview.net.

\bibitem[{Lakretz et~al.(2021)Lakretz, Hupkes, Vergallito, Marelli, Baroni, and
  Dehaene}]{lakretz2021}
Yair Lakretz, Dieuwke Hupkes, Alessandra Vergallito, Marco Marelli, Marco
  Baroni, and Stanislas Dehaene. 2021.
\newblock Mechanisms for handling nested dependencies in neural-network
  language models and humans.
\newblock \emph{Cognition}, page 104699.

\bibitem[{Lakretz et~al.(2019)Lakretz, Kruszewski, Desbordes, Hupkes, Dehaene,
  and Baroni}]{lakretz2019emergence}
Yair Lakretz, German Kruszewski, Theo Desbordes, Dieuwke Hupkes, Stanislas
  Dehaene, and Marco Baroni. 2019.
\newblock \href {https://doi.org/10.18653/v1/N19-1002} {The emergence of number
  and syntax units in {LSTM} language models}.
\newblock In \emph{Proceedings of the 2019 Conference of the North {A}merican
  Chapter of the Association for Computational Linguistics: Human Language
  Technologies, Volume 1 (Long and Short Papers)}, pages 11--20, Minneapolis,
  Minnesota. Association for Computational Linguistics.

\bibitem[{Linzen et~al.(2016)Linzen, Dupoux, and
  Goldberg}]{linzen2016assessing}
Tal Linzen, Emmanuel Dupoux, and Yoav Goldberg. 2016.
\newblock Assessing the ability of {LSTMs} to learn syntax-sensitive
  dependencies.
\newblock \emph{Transactions of the Association for Computational Linguistics},
  4:521--535.

\bibitem[{Marvin and Linzen(2018)}]{marvin-linzen-2018-targeted}
Rebecca Marvin and Tal Linzen. 2018.
\newblock \href {https://doi.org/10.18653/v1/D18-1151} {Targeted syntactic
  evaluation of language models}.
\newblock In \emph{Proceedings of the 2018 Conference on Empirical Methods in
  Natural Language Processing}, pages 1192--1202, Brussels, Belgium.
  Association for Computational Linguistics.

\bibitem[{Maudslay et~al.(2019)Maudslay, Gonen, Cotterell, and
  Teufel}]{DBLP:conf/emnlp/MaudslayGCT19}
Rowan~Hall Maudslay, Hila Gonen, Ryan Cotterell, and Simone Teufel. 2019.
\newblock \href {https://doi.org/10.18653/v1/D19-1530} {It's all in the name:
  Mitigating gender bias with name-based counterfactual data substitution}.
\newblock In \emph{Proceedings of the 2019 Conference on Empirical Methods in
  Natural Language Processing and the 9th International Joint Conference on
  Natural Language Processing, {EMNLP-IJCNLP} 2019, Hong Kong, China, November
  3-7, 2019}, pages 5266--5274. Association for Computational Linguistics.

\bibitem[{McCoy et~al.(2019)McCoy, Pavlick, and Linzen}]{mccoy2019right}
Tom McCoy, Ellie Pavlick, and Tal Linzen. 2019.
\newblock \href {https://doi.org/10.18653/v1/P19-1334} {Right for the wrong
  reasons: Diagnosing syntactic heuristics in natural language inference}.
\newblock In \emph{{Proceedings of the 57th Annual Meeting of the Association
  for Computational Linguistics}}, pages 3428--3448, Florence, Italy.
  Association for Computational Linguistics.

\bibitem[{Miller(2018)}]{DBLP:journals/corr/abs-1811-03163}
Tim Miller. 2018.
\newblock \href {http://arxiv.org/abs/1811.03163} {Contrastive explanation: {A}
  structural-model approach}.
\newblock \emph{CoRR}, abs/1811.03163.

\bibitem[{Miller(2019)}]{DBLP:journals/ai/Miller19}
Tim Miller. 2019.
\newblock \href {https://doi.org/10.1016/j.artint.2018.07.007} {Explanation in
  artificial intelligence: Insights from the social sciences}.
\newblock \emph{Artif. Intell.}, 267:1--38.

\bibitem[{Mueller et~al.(2020)Mueller, Nicolai, Petrou-Zeniou, Talmina, and
  Linzen}]{mueller2020crosslinguistic}
Aaron Mueller, Garrett Nicolai, Panayiota Petrou-Zeniou, Natalia Talmina, and
  Tal Linzen. 2020.
\newblock Cross-linguistic syntactic evaluation of word prediction models.
\newblock In \emph{Proceedings of the 58th Annual Meeting of the Association
  for Computational Linguistics}, Seattle, Washington. Association for
  Computational Linguistics.

\bibitem[{Prasad et~al.(2019)Prasad, van Schijndel, and
  Linzen}]{prasad-etal-2019-using}
Grusha Prasad, Marten van Schijndel, and Tal Linzen. 2019.
\newblock \href {https://doi.org/10.18653/v1/K19-1007} {Using priming to
  uncover the organization of syntactic representations in neural language
  models}.
\newblock In \emph{Proceedings of the 23rd Conference on Computational Natural
  Language Learning (CoNLL)}, pages 66--76, Hong Kong, China. Association for
  Computational Linguistics.

\bibitem[{Pryzant et~al.(2021)Pryzant, Card, Jurafsky, Veitch, and
  Sridhar}]{DBLP:conf/naacl/PryzantCJVS21}
Reid Pryzant, Dallas Card, Dan Jurafsky, Victor Veitch, and Dhanya Sridhar.
  2021.
\newblock \href {https://www.aclweb.org/anthology/2021.naacl-main.323/} {Causal
  effects of linguistic properties}.
\newblock In \emph{Proceedings of the 2021 Conference of the North American
  Chapter of the Association for Computational Linguistics: Human Language
  Technologies, {NAACL-HLT} 2021, Online, June 6-11, 2021}, pages 4095--4109.
  Association for Computational Linguistics.

\bibitem[{Ravfogel et~al.(2020)Ravfogel, Elazar, Gonen, Twiton, and
  Goldberg}]{inlp}
Shauli Ravfogel, Yanai Elazar, Hila Gonen, Michael Twiton, and Yoav Goldberg.
  2020.
\newblock \href {https://www.aclweb.org/anthology/2020.acl-main.647/} {Null it
  out: Guarding protected attributes by iterative nullspace projection}.
\newblock In \emph{Proceedings of the 58th Annual Meeting of the Association
  for Computational Linguistics, {ACL} 2020, Online, July 5-10, 2020}, pages
  7237--7256. Association for Computational Linguistics.

\bibitem[{Ravichander et~al.(2021)Ravichander, Belinkov, and
  Hovy}]{abilasha_probing}
Abhilasha Ravichander, Yonatan Belinkov, and Eduard~H. Hovy. 2021.
\newblock \href {https://www.aclweb.org/anthology/2021.eacl-main.295/} {Probing
  the probing paradigm: Does probing accuracy entail task relevance?}
\newblock In \emph{Proceedings of the 16th Conference of the European Chapter
  of the Association for Computational Linguistics: Main Volume, {EACL} 2021,
  Online, April 19 - 23, 2021}, pages 3363--3377. Association for Computational
  Linguistics.

\bibitem[{Ross et~al.(2020)Ross, Marasovic, and
  Peters}]{DBLP:journals/corr/abs-2012-13985}
Alexis Ross, Ana Marasovic, and Matthew~E. Peters. 2020.
\newblock \href {http://arxiv.org/abs/2012.13985} {Explaining {NLP} models via
  minimal contrastive editing (mice)}.
\newblock \emph{CoRR}, abs/2012.13985.

\bibitem[{Shin et~al.(2020)Shin, Song, Jang, Kim, Joo, and
  Moon}]{DBLP:conf/emnlp/ShinSJKJM20}
Seungjae Shin, Kyungwoo Song, JoonHo Jang, Hyemi Kim, Weonyoung Joo, and
  Il{-}Chul Moon. 2020.
\newblock \href {https://doi.org/10.18653/v1/2020.findings-emnlp.280}
  {Neutralizing gender bias in word embedding with latent disentanglement and
  counterfactual generation}.
\newblock In \emph{Proceedings of the 2020 Conference on Empirical Methods in
  Natural Language Processing: Findings, {EMNLP} 2020, Online Event, 16-20
  November 2020}, pages 3126--3140. Association for Computational Linguistics.

\bibitem[{Slobodkin et~al.(2021)Slobodkin, Choshen, and
  Abend}]{DBLP:conf/naacl/SlobodkinCA21}
Aviv Slobodkin, Leshem Choshen, and Omri Abend. 2021.
\newblock \href {https://www.aclweb.org/anthology/2021.naacl-main.8/}
  {Mediators in determining what processing {BERT} performs first}.
\newblock In \emph{Proceedings of the 2021 Conference of the North American
  Chapter of the Association for Computational Linguistics: Human Language
  Technologies, {NAACL-HLT} 2021, Online, June 6-11, 2021}, pages 86--93.
  Association for Computational Linguistics.

\bibitem[{Tamkin et~al.(2020)Tamkin, Singh, Giovanardi, and
  Goodman}]{tamkin2020investigating}
Alex Tamkin, Trisha Singh, Davide Giovanardi, and Noah~D. Goodman. 2020.
\newblock \href {https://doi.org/10.18653/v1/2020.findings-emnlp.125}
  {Investigating transferability in pretrained language models}.
\newblock In \emph{Proceedings of the 2020 Conference on Empirical Methods in
  Natural Language Processing: Findings, {EMNLP} 2020, Online Event, 16-20
  November 2020}, pages 1393--1401. Association for Computational Linguistics.

\bibitem[{Tucker et~al.(2021)Tucker, Qian, and
  Levy}]{DBLP:journals/corr/abs-2105-14002}
Mycal Tucker, Peng Qian, and Roger Levy. 2021.
\newblock \href {http://arxiv.org/abs/2105.14002} {What if this modified that?
  syntactic interventions via counterfactual embeddings}.
\newblock \emph{CoRR}, abs/2105.14002.

\bibitem[{Turc et~al.(2019)Turc, Chang, Lee, and
  Toutanova}]{DBLP:journals/corr/abs-1908-08962}
Iulia Turc, Ming{-}Wei Chang, Kenton Lee, and Kristina Toutanova. 2019.
\newblock \href {http://arxiv.org/abs/1908.08962} {Well-read students learn
  better: The impact of student initialization on knowledge distillation}.
\newblock \emph{CoRR}, abs/1908.08962.

\bibitem[{Verma et~al.(2020)Verma, Dickerson, and
  Hines}]{DBLP:journals/corr/abs-2010-10596}
Sahil Verma, John~P. Dickerson, and Keegan Hines. 2020.
\newblock \href {http://arxiv.org/abs/2010.10596} {Counterfactual explanations
  for machine learning: {A} review}.
\newblock \emph{CoRR}, abs/2010.10596.

\bibitem[{Vig et~al.(2020)Vig, Gehrmann, Belinkov, Qian, Nevo, Singer, and
  Shieber}]{DBLP:journals/corr/abs-2004-12265}
Jesse Vig, Sebastian Gehrmann, Yonatan Belinkov, Sharon Qian, Daniel Nevo,
  Yaron Singer, and Stuart~M. Shieber. 2020.
\newblock \href {http://arxiv.org/abs/2004.12265} {Causal mediation analysis
  for interpreting neural {NLP:} the case of gender bias}.
\newblock \emph{CoRR}, abs/2004.12265.

\bibitem[{Voita et~al.(2020)Voita, Sennrich, and
  Titov}]{DBLP:journals/corr/abs-2010-10907}
Elena Voita, Rico Sennrich, and Ivan Titov. 2020.
\newblock \href {http://arxiv.org/abs/2010.10907} {Analyzing the source and
  target contributions to predictions in neural machine translation}.
\newblock \emph{CoRR}, abs/2010.10907.

\bibitem[{Warstadt et~al.(2020)Warstadt, Parrish, Liu, Mohananey, Peng, Wang,
  and Bowman}]{warstadt2020blimp}
Alex Warstadt, Alicia Parrish, Haokun Liu, Anhad Mohananey, Wei Peng, Sheng-Fu
  Wang, and Samuel~R Bowman. 2020.
\newblock Blimp: The benchmark of linguistic minimal pairs for english.
\newblock \emph{Transactions of the Association for Computational Linguistics},
  8:377--392.

\bibitem[{Woodward(2005)}]{woodward2005making}
James Woodward. 2005.
\newblock \emph{Making things happen: A theory of causal explanation}.
\newblock Oxford university press.

\bibitem[{Zmigrod et~al.(2019)Zmigrod, Mielke, Wallach, and
  Cotterell}]{DBLP:conf/acl/ZmigrodMWC19}
Ran Zmigrod, S.~J. Mielke, Hanna~M. Wallach, and Ryan Cotterell. 2019.
\newblock \href {https://doi.org/10.18653/v1/p19-1161} {Counterfactual data
  augmentation for mitigating gender stereotypes in languages with rich
  morphology}.
\newblock In \emph{Proceedings of the 57th Conference of the Association for
  Computational Linguistics, {ACL} 2019, Florence, Italy, July 28- August 2,
  2019, Volume 1: Long Papers}, pages 1651--1661. Association for Computational
  Linguistics.

\end{thebibliography}
\bibliographystyle{acl_natbib}

\clearpage
\newpage
\appendix

\section{Appendix}

\subsection{Correctness of the Counterfactual Generation}
\label{app:proofs}
In this appendix, we prove that the method presented in \S\ref{sec:counterfactual} is guaranteed to achieve its goal: the negative counterfactual $\vec{h_t^{-}}$ would be classified as belonging to the negative class, and the positive counterfactual $\vec{h_t^{+}}$ would be classified as belonging to the positive class, according to all the linear classifiers $w$ trained on the original representation. 

We base our derivation on the decomposition presented in \S\ref{sec:counterfactual}:

\begin{equation}
    \vec{h_t} = \vec{h_t^N} + \vec{h_t^R} = \vec{h_t^N}  + \sum_{\vec{w} \in W}{\vec{h_t^w}}
    \label{decomposition-equation}
\end{equation}

Where $N$ is the nullspace of the INLP matrix $\matr{W}$, $R$ is its rowspace, and $\vec{h_t^N}$ and $\vec{h_t^R}$ are the orthogonal projection of a representation $\vec{h_t}$ to those subspaces, respectively. 

We focus on the negative counterfactual; The proof for the positive counterfactual is similar. In the proceeding discussion,  $w_j \in \R^d$ is an arbitrary linear classifier trained on the $j$th iteration of INLP (one of the rows in the matrix $\matr{W}$). $w_j$ predicts a negative or positive class $\hat{y} \in \{0,1\}$ according to the decision rule $\hat{y} = SIGN(w_j^Th_t)$ \footnote{For simplicity, we define $SIGN(x) = 1$ if $x \geq 0$ else $0$. $0$ corresponds to the negative class.}. We denote by $\vec{h_t}^{w}$ the orthogonal projection of $\vec{h_t}$ on a direction $w$, given by $(\vec{h_t}^Tw) \vec{w}$.

\begin{claim}
For the negative counterfactual defined by $\vec{h_t^{-}} = \vec{h_t^{N}} + \alpha \sum_{i=0}^{m} {(-1)^{SIGN(w_i^T\vec{h_t})} } \vec{{h_t}^{w_i}}$, it holds that $\vec{h_t^{-}}$ would always be classified to the negative class: $w_j^Th_t^{-}<0$ for every $w_j$ in the original INLP matrix $\matr{W}$.
\end{claim}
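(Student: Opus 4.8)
The plan is to evaluate $w_j^T \vec{h_t^{-}}$ directly for an arbitrary row $w_j$ of $\matr{W}$ and show the result is negative, exploiting two orthogonality facts that follow immediately from the construction. First, I would substitute the definition $\vec{h_t^{-}} = \vec{h_t^{N}} + \alpha \sum_{i} (-1)^{\textit{SIGN}(w_i^T\vec{h_t})} \vec{h_t^{w_i}}$ and distribute $w_j^T$ across the sum. The term $w_j^T \vec{h_t^{N}}$ vanishes because $\vec{h_t^{N}}$ lies in the null space $N$, which is the orthogonal complement of the row space $R = \text{span}(\matr{W})$ containing $w_j$. This kills the null-space contribution entirely, so only the weighted sum over the projections $\vec{h_t^{w_i}}$ remains.

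Next I would use the fact, stated in \Cref{sec:inlp}, that the rows of $\matr{W}$ are mutually orthogonal. Since each projection $\vec{h_t^{w_i}} = (\vec{h_t}^T w_i)\vec{w_i}$ is a scalar multiple of $w_i$, we have $w_j^T \vec{h_t^{w_i}} = 0$ for every $i \neq j$. Thus all cross terms drop out and only the diagonal $i = j$ term survives, giving
\begin{equation}
    w_j^T \vec{h_t^{-}} = \alpha\,(-1)^{\textit{SIGN}(w_j^T\vec{h_t})}\, (w_j^T\vec{h_t})\,\|w_j\|^2 .
    \label{eq:claim-reduction}
\end{equation}
Because $\alpha > 0$ and $\|w_j\|^2 > 0$, the sign of the right-hand side is controlled entirely by the factor $(-1)^{\textit{SIGN}(w_j^T\vec{h_t})} (w_j^T\vec{h_t})$.

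The final step is a case split on the sign of $w_j^T\vec{h_t}$. If $w_j^T\vec{h_t} > 0$, then $\textit{SIGN}(w_j^T\vec{h_t}) = 1$, the factor $(-1)^1 = -1$ flips the sign, and the product is strictly negative. If $w_j^T\vec{h_t} < 0$, then $\textit{SIGN}(w_j^T\vec{h_t}) = 0$, the factor $(-1)^0 = 1$ leaves it unchanged, and the product is again strictly negative. In both cases $w_j^T \vec{h_t^{-}} < 0$, which is exactly the desired conclusion, and the argument is uniform over all $j$. The positive counterfactual follows by the identical computation with the exponent $1 - \textit{SIGN}(w_j^T\vec{h_t})$, which reverses each sign.

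The only genuine subtlety — and the one place a strict inequality could fail — is the boundary case $w_j^T\vec{h_t} = 0$, where \eqref{eq:claim-reduction} evaluates to exactly $0$ rather than a negative number. This corresponds to the original representation lying precisely on a separating plane, a measure-zero event in practice; I would either explicitly assume $w_j^T\vec{h_t} \neq 0$ for all $j$ or note that the $\textit{SIGN}$ convention assigns such points to the positive class, so the pathological case is harmless to the intended use. Everything else is routine linear algebra, so I do not anticipate any real obstacle beyond stating this edge case cleanly.
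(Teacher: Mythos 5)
Your proposal is correct and follows essentially the same route as the paper's own proof: expand $w_j^T\vec{h_t^{-}}$, kill the null-space term by orthogonality of $N$ and $R$, kill the cross terms by mutual orthogonality of the INLP directions, and finish with the same two-way case split on the sign of $w_j^T\vec{h_t}$. Your explicit handling of the boundary case $w_j^T\vec{h_t}=0$ (where the strict inequality degenerates to equality) is a small but genuine improvement, since the paper's case analysis silently omits it.
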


\begin{proof}
\begin{align}
    w_j^T \vec{h_t^{-}} &= w_j^T(\vec{h_t^{N}} + \alpha \sum_{i=0}^m(-1)^{SIGN(w_i^T\vec{h_t})} \vec{{h_t}^w_i}) \label{first} \\
    &= w_j^T(\alpha \sum_{i=0}^m(-1)^{SIGN(w_i^T\vec{h_t})} \vec{{h_t}}^{w_i}) \label{second} \\
    &= \alpha {w_j}^T((-1)^{SIGN({w_j}^T\vec{h_t})} \vec{{h_t}^{w_{j}}}) \label{third}
\end{align}

Where the transition from \ref{first} to \ref{second} stems from $\vec{h_t^{N}}$ being in the nullsapce of $\matr{W}$, so $\forall w \in \matr{W}: w^T\vec{h_t^{N}}=0$; and the transition from \ref{second} to \ref{third} stems from the mutual orthogonality of the INLP classifiers (proved in \citet{inlp}): since $\forall\ j \neq i, w_i^Tw_j=0$, it holds that $w_j^T\vec{h_t^{w_{i}}}=w_j^T( (\vec{h_t}^Tw_i) w_i  ) =  (\vec{h_t}^Tw_i)w_j^Tw_i=0$.

Now, we consider two cases.
\begin{itemize}
    \item \textbf{Case 1:} $w_j^T\vec{h_t}>0$, that is, the classifier predicted the positive class on the \textit{original} representation. Then, by \ref{third}, 
    
    \begin{align}
        w_j^T\vec{h_t^{-}} &= \alpha w_j^T((-1)^{SIGN(w_j^T\vec{h_t})} \vec{h_t^{w_{j}}}) \\
        &= \alpha w_j^T (-1)h_t^{w_j} \\
        & = -\alpha w_j^Th_t^{w_j}
    \end{align}
    
    Since $\alpha$ is a positive scalar and by assumption $w_j^T\vec{h_t}>0$, it holds that $w_j^T\vec{h_y^{-}}<0$.

        \item \textbf{Case 2:} $Sw_j^T\vec{h_t}<0$, that is, the classifier predicted the negative class on the \textit{original} representation. Then, by \ref{third}, 
    
    \begin{align}
        w_j^T\vec{h_t^{-}} &= \alpha w_j^T((-1)^{SIGN(w_j^T\vec{h_t})} \vec{h_t^{w_{j}}}) \\
        &= \alpha w_j^T h_t^{w_j} \\
        & = \alpha w_j^Th_t^{w_j}
    \end{align}
    
    Since $\alpha$ is a positive scalar and by assumption $w_j^T\vec{h_t}<0$, it holds that $w_j^T\vec{h_y^{-}}<0$.
\end{itemize}

We have proved that regardless of the originally predicted label, all INLP classifiers would predict the negative class on the negative counterfactual, which concludes the proof.
\end{proof}

\subsection{Probing Accuracy}
\label{app:probing}

\begin{figure}[h]
    \centering
    \includegraphics[width=\columnwidth]{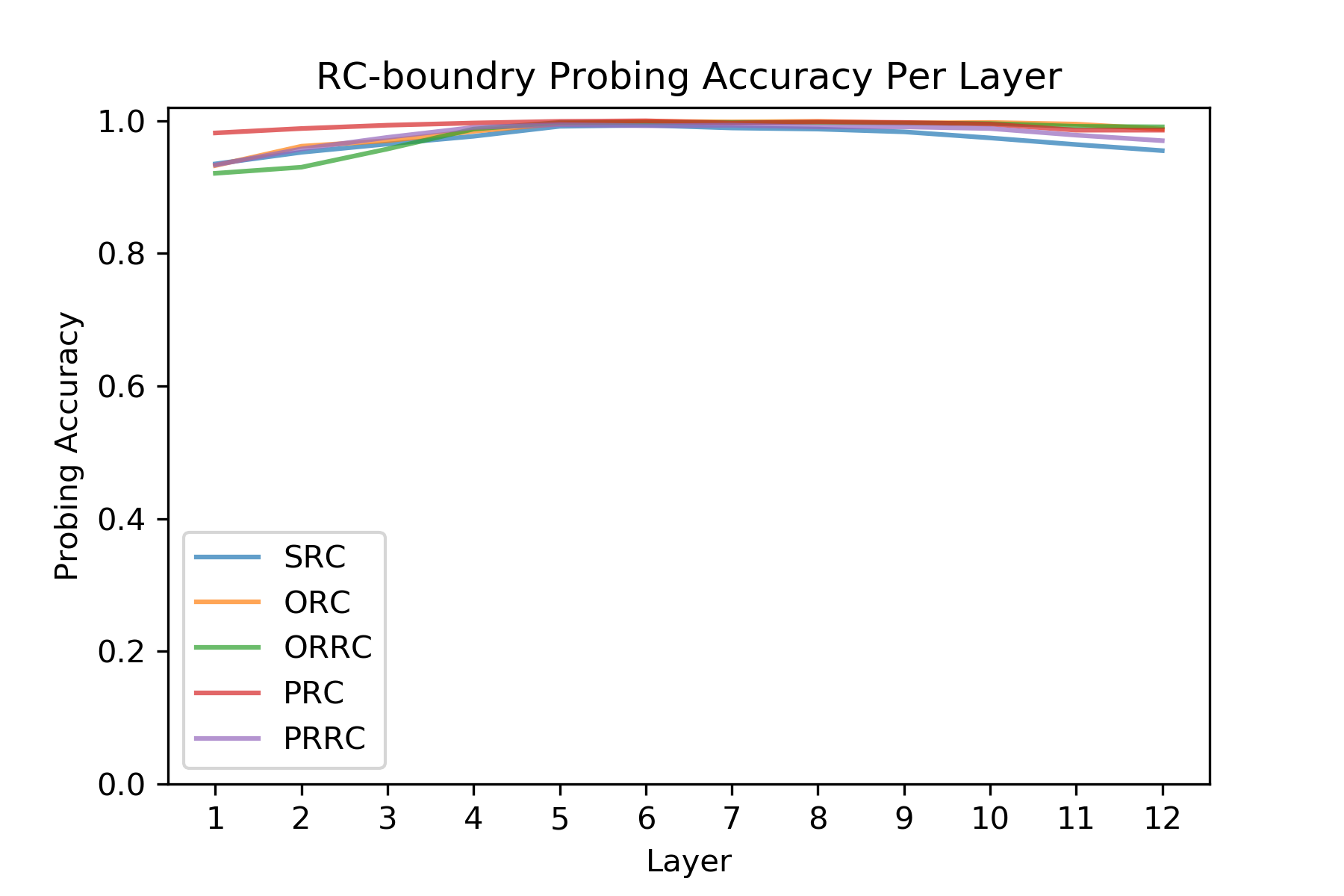}
    \caption{Probing accuracy for the presence of words within or outside of RCs,  vs. BERT-base layers, for all the different RC types in our experiments. }
    \label{fig:probing}
\end{figure}

In this appendix, we provide probing results for the task on which we run INLP: detecting whether representation was taken over a word inside or outside of an RC. As INLP iteratively trains linear probes, this accuracy is equivalent to the accuracy of the first INLP classifier. In all contextualized layers, we observe probing accuracy of over 90\% for all RC types (Figure \ref{fig:probing}). This contrasts with the intervention results in \S\ref{intervention_results}. While it is possible to linearly decode the RC boundary in \textit{all} layers, only in the middle layers do we find that this concept \textit{causally} influences the model's behavior. In other words, good probing performance does not indicate main-task relevancy.  

\subsection{Influence of the Dimensionality of the RC Subspace}
\label{app:iters}

\begin{figure}[h]
\begin{subfigure}[b]{0.41\textwidth}
   \includegraphics[width=0.95\linewidth]{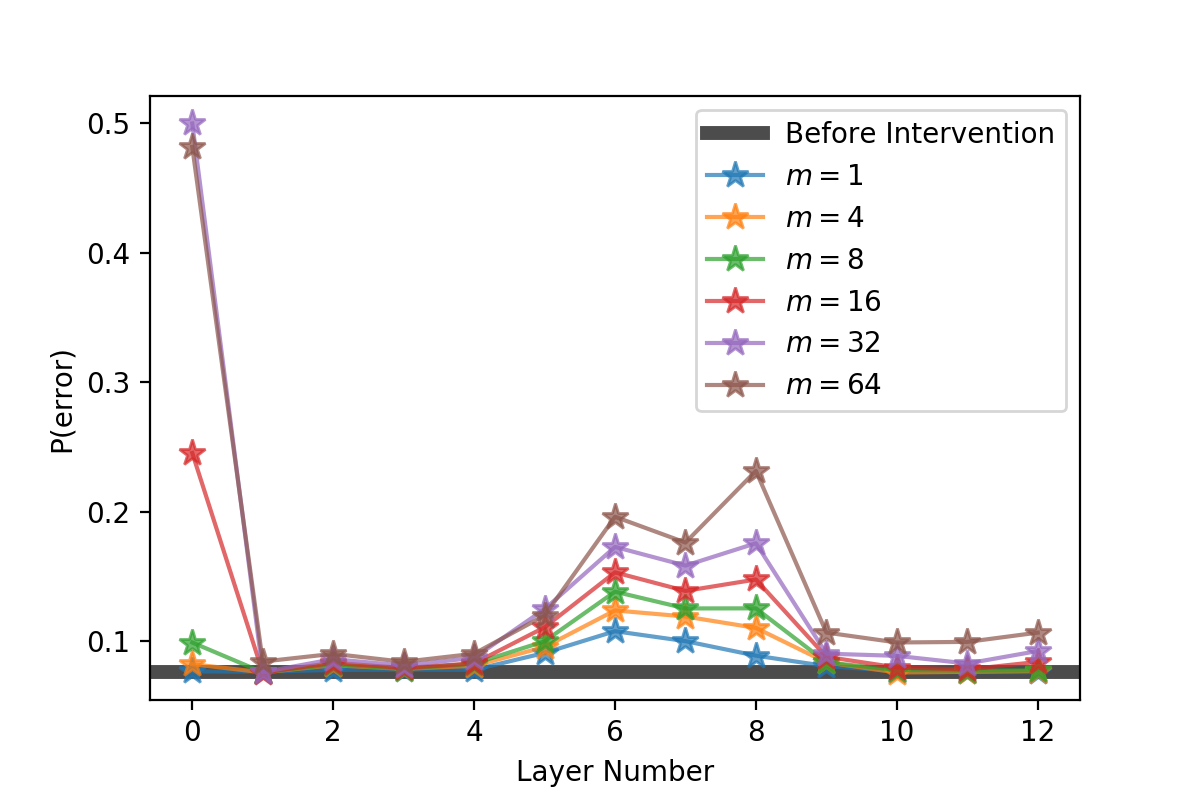}
   \caption{\textbf{Positive} intervention results on sentences with agreement across RC with attractors.}
   \label{fig:iters-agreement} 
\end{subfigure}

\begin{subfigure}[b]{0.41\textwidth}
   \includegraphics[width=0.95\linewidth]{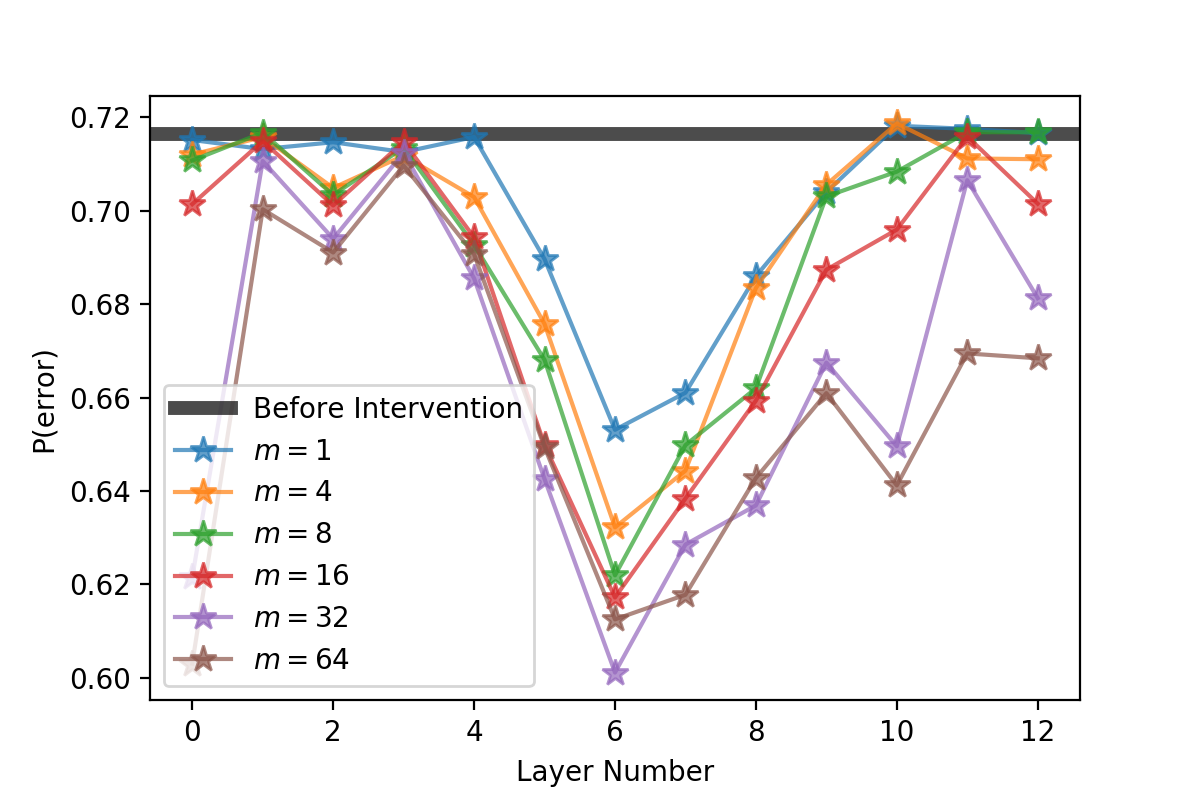}
   \caption{\textbf{Negative} intervention results on sentences with agreement across RC on which the model originally predicted incorrectly.}
   \label{fig:iters-error}
  
\end{subfigure}

\begin{subfigure}[b]{0.41\textwidth}
   \includegraphics[width=0.95\linewidth]{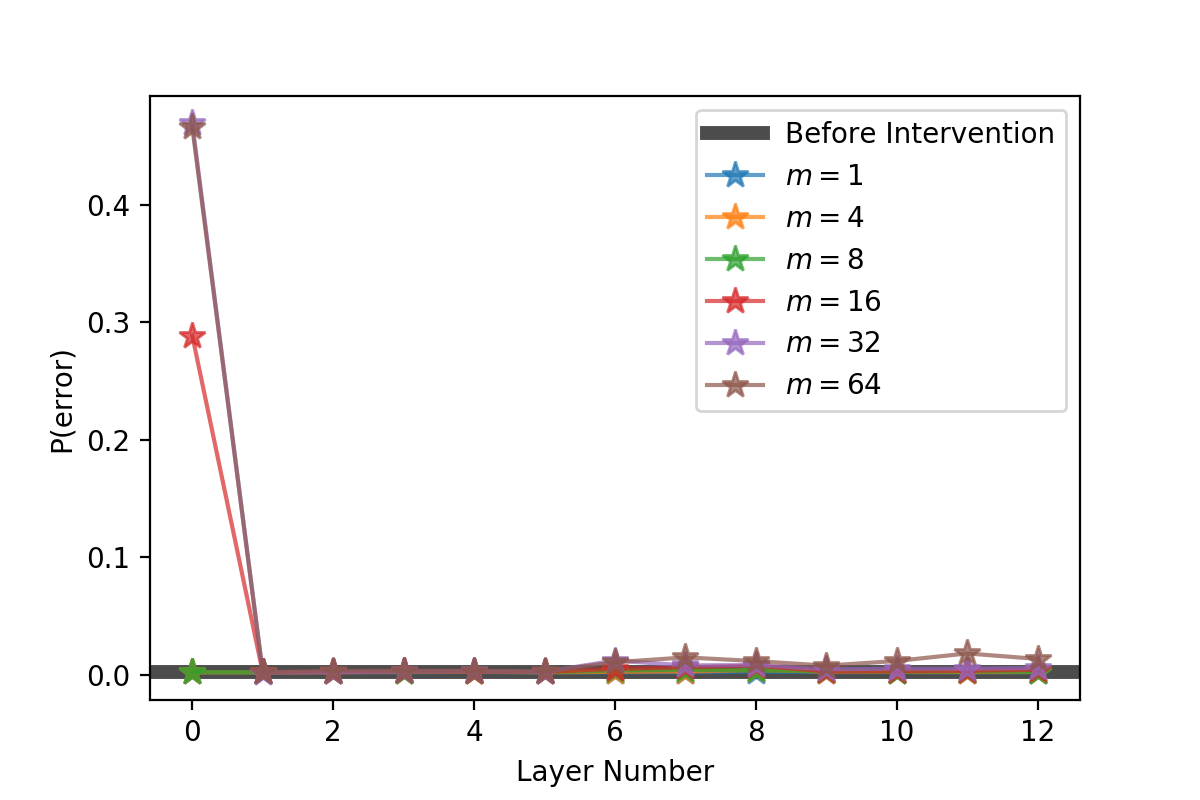}
   \caption{\textbf{Positive} intervention results on sentences with simple agreement and sentences with sentential complements.}
   \label{fig:iters-no-agreement}
  
\end{subfigure}

\end{figure}

In this appendix, we analyze the influence of the dimensionality $m$ of the RC subspace. Recall that INLP is an iterative algorithm (\S\ref{sec:inlp}). On the $i$th iteration, the method identifies a single direction $\vec{w_i}$---the parameter vector of a linear classifier---which is predictive of the concept of interest (in our case, RC). The different directions are mutually orthogonal, and after $m$ iterations, the ``concept subspace" is the subspace spanned the rows of the matrix $\matr{W}=[\vec{w_1}^T, \vec{w_2}^T, \dots, \vec{w_m}^T]$. In the $i$th iteration of INLP, the subspace identified so far is \textit{removed} from the representation (by the operation of nullspace projection), and the next classifier $\vec{w_{i+1}}$ is trained to predict the concept over the residual representation. As such, accuracy is expected to decrease with the number of iterations: as the number of iterations increases, the algorithm identifies directions which have a weaker association with the concept. This creates a trade-off between exhaustively -- identifying all the directions which are at least somewhat predictive of the concept, and selectivity -- identifying only directions which have a meaningful association with the concept.

Figure \ref{fig:iters-agreement} presents positive intervention results for different RC-subspace dimensionality on sentences with agreement across RC with attractors; Figure \ref{fig:iters-error} present negative intervention results on sentences on which the model was originally mistaken. Generally, we observe the same trends under all settings, suggesting our method is relatively robust to the dimensionality of the manipulated subspace. In figure \ref{fig:iters-no-agreement} we present the results of intervening on subspaces of different dimensionality, for sentences where we \textit{do not} expect an effect: sentences without attractors, and sentences without RCs. For all contextualized layers we do not see an effect, as expected. For $m=32$ and $m=64$, we see an effect on the uncontextualized embedding layer. This effect may hint towards a spurious information encoded in this uncontexualized layer which is used by the model when predicting agreement, but studying this possibility is beyond the scope of this work.

\subsection{Influence of $\alpha$}
\label{app:alpha}

\begin{figure}[h]
\begin{subfigure}[b]{0.41\textwidth}
   \includegraphics[width=0.95\linewidth]{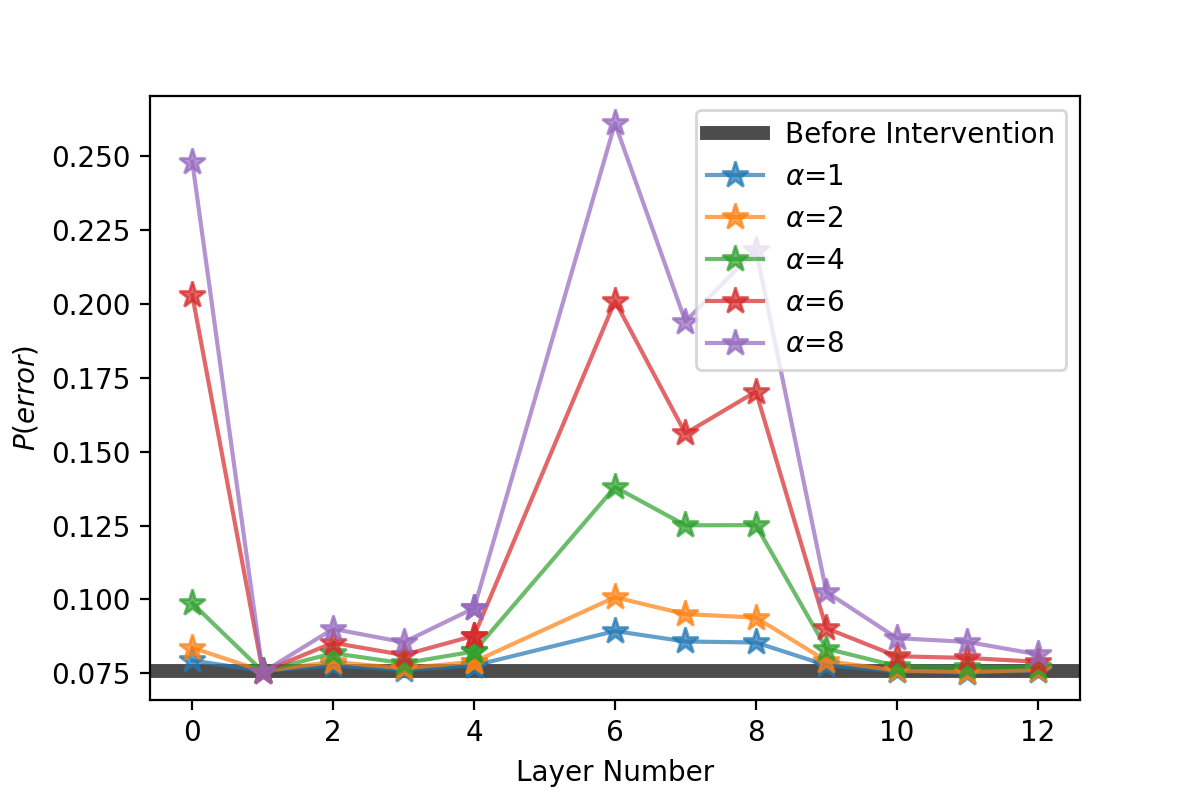}
   \caption{\textbf{Positive} intervention results on sentences with agreement across RC with attractors.}
   \label{fig:alphas-agreement} 
\end{subfigure}

\begin{subfigure}[b]{0.41\textwidth}
   \includegraphics[width=0.95\linewidth]{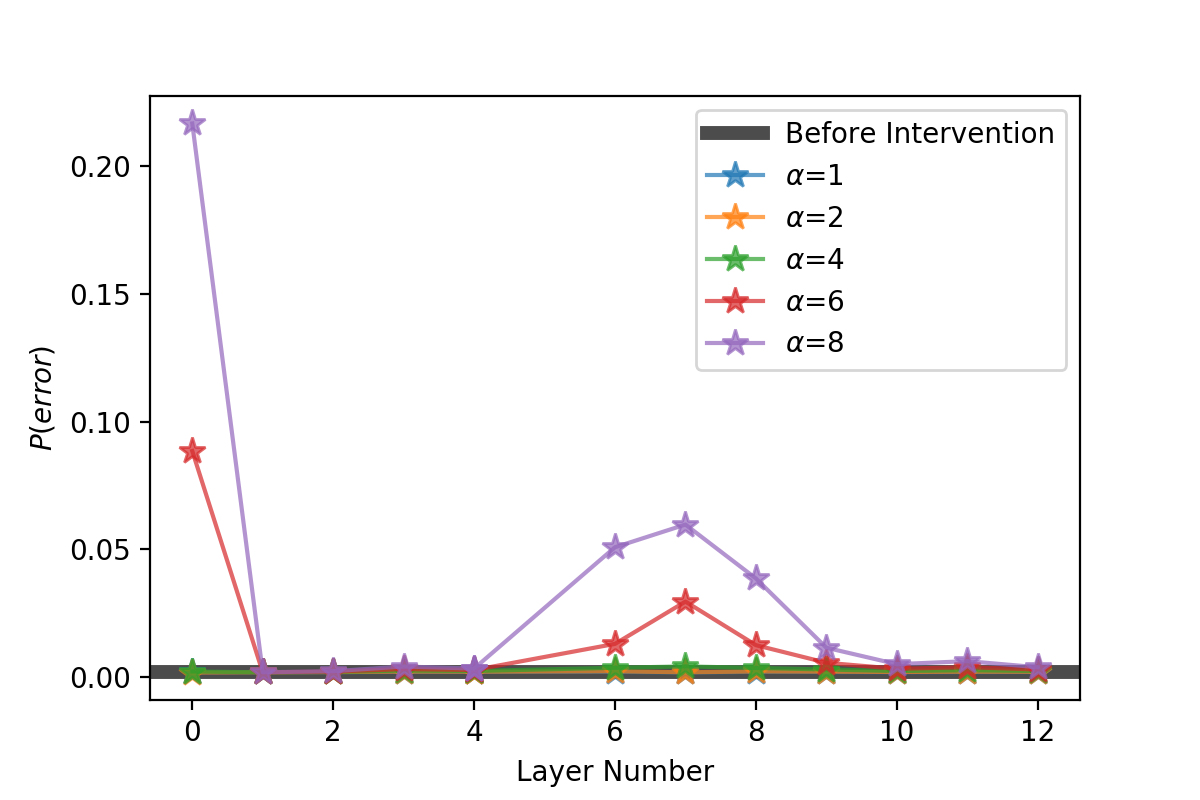}
   \caption{\textbf{Positive} intervention results on sentences with simple agreement and sentences with sentential complements.}
   \label{fig:alphas-no-agreement}
  
\end{subfigure}

\begin{subfigure}[b]{0.41\textwidth}
   \includegraphics[width=0.95\linewidth]{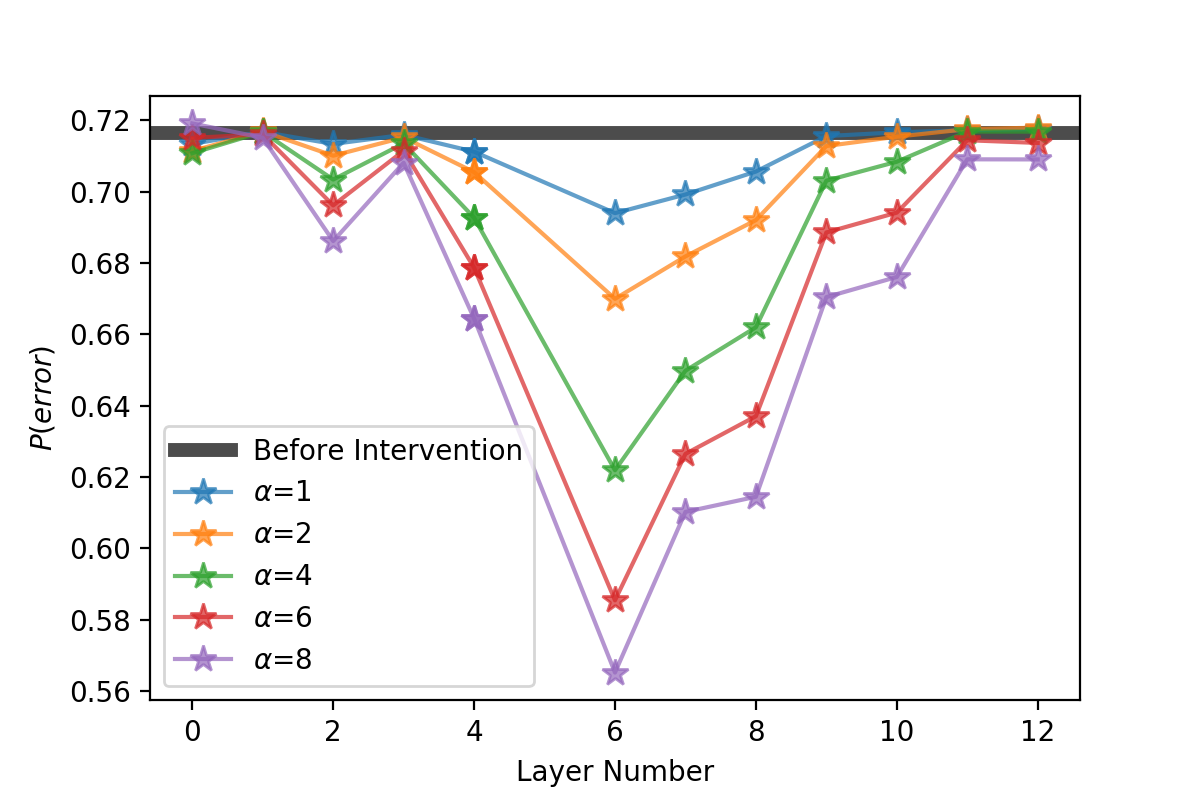}
   \caption{\textbf{Negative} intervention results on sentences with agreement across RC on which the model originally predicted incorrectly.}
   \label{fig:alphas-agreement-neg}
  
\end{subfigure}

\caption[Influence of Alpha]{Influence of $\alpha$ on the probability of error post intervention. }
\end{figure}

In this appendix, we analyze the influence of the parameter $\alpha$ in the \method\ algorithm (Section \S\ref{sec:counterfactual}) on the BERT-base model. Recall that $\alpha$ dictates the step size one takes when calculating the counterfactual mirror image: $\alpha=1$ corresponds to exact mirror image, while $\alpha > 1$  over-emphasizes the RC components over which we take the counterfactual mirror image. 

In Figures \ref{fig:alphas-agreement} and \ref{fig:alphas-no-agreement} we focus on the positive intervention, which is expected to increase the probability of error, making the model act as if the masked verb is within the RC; and in Figure \ref{fig:alphas-agreement-neg} we focus on the negative intervention on sentences on which the model was originally mistaken, which is expected to decrease the probability of error. 

In Figure \ref{fig:alphas-no-agreement} we present the results on the control sentences: sentences without agreement across RC. Overall, the trends we observe are similar for different values of $\alpha$, indicating that \method\ is relatively robust to the value of this parameter. One exception is the large values of $\alpha=8$ and to a lesser degree $\alpha=6$, which result in some increase in the probability of error also in the control sentences, where we do not expect such effect (Figure \ref{fig:alphas-no-agreement}), albeit this increase is much smaller than the increase on sentences with agreement across RC. With a large-enough $\alpha$, the new counterfactual representation might diverge too-much from the distribution of the original representations. Notice that when compared with gradient-based methods for generating counterfactuals \cite{DBLP:journals/corr/abs-2105-14002}, our linear approach has the advantage of being able to control the magnitude of the intervention with a single controlled parameter, which has a clear geometric interpretation: the extent to which one pushes the representations to one direction or another when taking the mirror image.    

\subsection{Influence on Accuracy}
\label{app:accuracy}
\begin{figure}[h]
    \centering
    \includegraphics[width=\columnwidth]{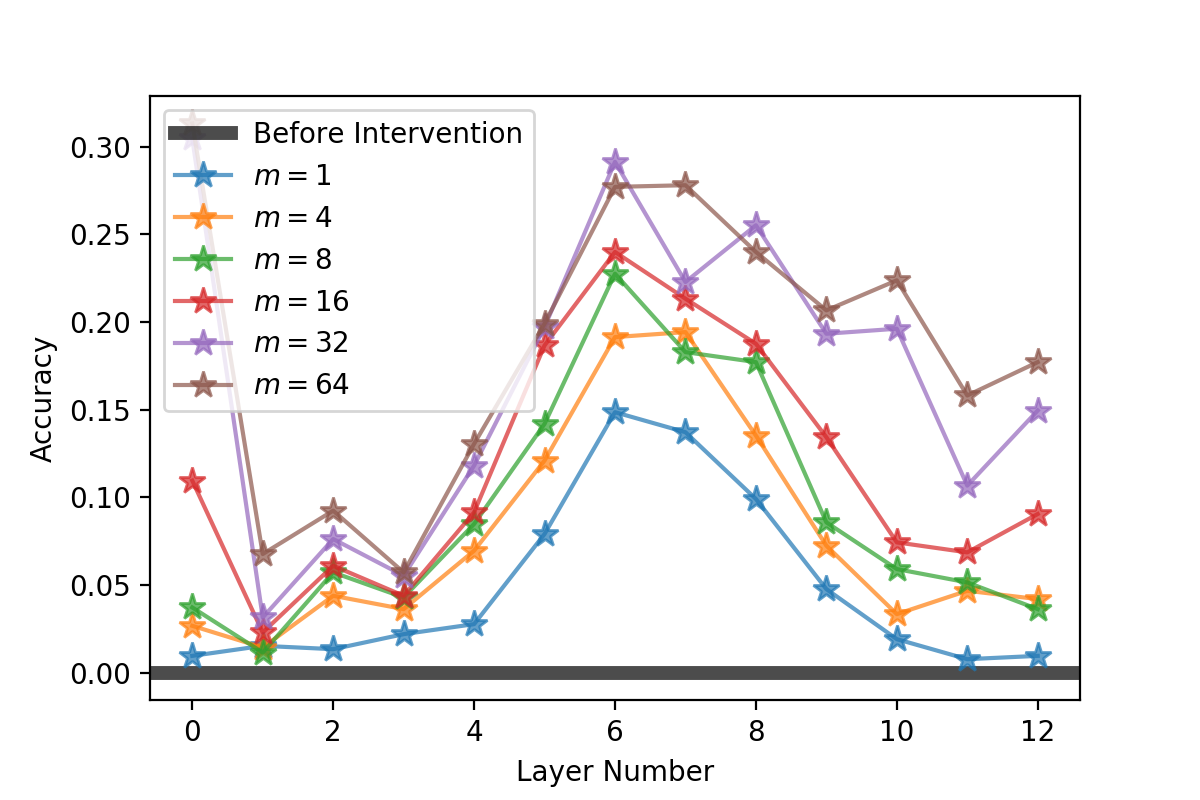}
    \caption{Influence of the \textit{negative intervention} on \textit{accuracy} (the percentage of cases where the model favors the correct form), on sentences on which the model was originally mistaken.}
    \label{fig:fliprate}
\end{figure}

In this appendix, we evaluate the impact of the intervention by its influence on the model's accuracy, calculated as the percentage of cases where the model assigned higher probability to the correct form than to the incorrect form. We focus on the cases on which the model originally predicted incorrectly, Thus, the original accuracy on this group of sentences is 0\%. We use a negative intervention, pushing the model to act as if the verb is (correctly) outside of the RC, which is expected to increase its accuracy.

In Section \S\ref{sec:methods-agreement} we use an alternative measure: probability-of-error. The probability of error is a more sensitive measure, as it might change even when the model's absolute preference for one form over the other has not. However, it is the absolute ranking which eventually dictates the model's top prediction.

Figure \ref{fig:fliprate} presents the results for different dimensionalities of the RC subspace. The trends are similar to the trends shown by the probability-of-error evaluation measure. Notably, in up to ~30\% of the cases, it is possible to flip the model's preference from the incorrect to the correct form solely by manipulating a low-dimensional subspace within the 768-dimensional representation space.  

\end{document}